\algrenewcommand\algorithmicrequire{\textbf{Input:}}
\algrenewcommand\algorithmicensure{\textbf{Output:}}
\newcommand{\hide}[1]{}
\DeclarePairedDelimiter{\nint}\lfloor\rceil
\newcommand\blfootnote[1]{%
  \begingroup
  \renewcommand\thefootnote{}\footnote{#1}%
  \addtocounter{footnote}{-1}%
  \endgroup
}
\newdimen\figrasterwd
\begin{document}

\title{Correctness Verification of Neural Networks}
%
%

\author{Yichen Yang \and
Martin Rinard 
}
%
\institute{Department of Electrical Engineering and Computer Science, \\
Massachusetts Institute of Technology, USA \\
\email{\{yicheny,rinard\}@csail.mit.edu}}

\maketitle              

\begin{abstract}

We present a novel framework for specifying and verifying correctness globally for neural networks on perception tasks. Most previous works on neural network verification for perception tasks focus on robustness verification. Unlike robustness verification, which aims to verify that the prediction of a network is {\em stable} in some {\em local} regions around labelled points, our framework provides a way to specify correctness {\em globally} in the whole target input space and verify that the network is {\em correct} for all target inputs (or find the regions where the network is not correct). We provide a {\em specification} through 1) a {\em state space} consisting of all relevant states of the world and 2) an {\em observation process} that produces neural network inputs from the states of the world. Tiling the state and input spaces with a finite number of tiles, obtaining ground truth bounds from the state tiles and network output bounds from the input tiles, then comparing the ground truth and network output bounds delivers an upper bound on the network output error for any inputs of interest. The presented framework also enables detecting illegal inputs -- inputs that are not contained in (or close to) the target input space as defined by the state space and observation process (the neural network is not designed to work on them), so that we can flag when we don't have guarantees. Results from two case studies highlight the ability of our technique to verify error bounds over the whole target input space and show how the error bounds vary over the state and input spaces. \blfootnote{A shorter version of this paper is published in NeurIPS 2019 Workshop on Machine Learning with Guarantees.}

\end{abstract}
\section{Introduction}

Neural networks are now recognized as powerful function approximators with impressive performance across a wide range of applications. A large body of work focuses on providing formal guarantees for neural networks \cite{ai2-2018,Reluplex2017,ConvDual2018,tjeng2019,singh2018robustness,control2018,fairsquare-17}. For perception tasks (e.g. vision, speech recognition), most work has focused on robustness verification \cite{ai2-2018,Reluplex2017,ConvDual2018,tjeng2019,singh2018robustness}, which aims to verify if the network prediction is {\em stable} for all inputs in some neighborhood around a selected input point. However, there is yet no systematic way to specify, then verify, that the network is {\em correct} (within a specified tolerance) {\em globally} over target regions of the input space of interest.

We present the first framework for specifying and verifying the correctness of neural networks on perception tasks. Neural networks are often used to predict some property of the world given an observation such as an image or audio recording. We provide a {\em specification} through 1) a {\em state space} consisting of a symbolic representation of all target states of the world and 2) an {\em observation process} that produces neural network inputs from these target states. Then the target inputs of interest are all inputs that can be observed from the state space via the observation process. We define the set of target inputs as the {\em target input space}. Because the quantity that the network predicts is some property of the observed state of the world, the state defines the ground truth output (and therefore defines the correct output for each target input to the neural network).

We present \textit{Tiler}, an algorithm for correctness verification of neural networks. Evaluating the correctness of the network on a single state is straightforward --- use the observation process to obtain the possible inputs for that state, use the neural network to obtain the possible outputs, then compare the outputs to the ground truth from the state. To do correctness verification, we generalize this idea to work with {\em tiled} state and input spaces. We cover the state and input spaces with a finite number of \textit{tiles}: each state tile comprises a set of states; each input tile is the image of the corresponding state tile under the observation process. The state tiles provide ground truth bounds for the corresponding input tiles. We use recently developed techniques from the robustness verification literature to obtain network output bounds for each input tile~\cite{maxsens2018,ai2-2018,Fastlin2018,ILP2016,NSVerify2017,tjeng2019,eevbnn}. A comparison of the ground truth and network output bounds delivers an error upper bound for that region of the state space. The error bounds for all the tiles jointly provide the correctness verification result.

The proposed framework also enables detecting illegal inputs -- inputs that are not within (or close to) the target input space given by the specification. The neural network is not designed to work on these inputs, so we can flag them when they occur in runtime. The result is that the client or user of the neural network knows when the correctness guarantee holds or may not hold. By utilizing neural network prediction to guide the search, we are able to speed up the detecting process significantly.

We present two case studies. The first involves a world with a (idealized) fixed road and a camera that can vary its horizontal offset and viewing angle with respect to the centerline of the road (Section~\ref{sec:case-study-1}). The state of the world is therefore characterized by the offset $\delta$ and the viewing angle $\theta$. A neural network takes the camera image as input and predicts the offset and the viewing angle. The state space includes the $\delta$ and $\theta$ of interest. The observation process is the camera imaging process, which maps camera positions to images. This state space and the camera imaging process provide the specification. The target input space is the set of camera images that can be observed from all camera positions of interest. For each image, the camera positions of all the states that can produce the image give the possible ground truths. We tile the state space using a grid on $(\delta, \theta)$. Each state tile gives a bound on the ground truth of $\delta$ and $\theta$. We then apply the observation process to project each state tile into the image space. We compute a bounding box for each input tile and apply techniques from robustness verification~\cite{tjeng2019} to obtain neural network output bounds for each input tile. Comparing the ground truth bounds and the network output bounds gives upper bounds on network prediction error for each tile. We verify that our trained neural network provides good accuracy across the majority of the state space of interest and bound the {\em maximum} error the network will ever produce on any target input.

The second case study verifies a neural network that classifies a LiDAR measurement of a sign in an (idealized) scene into one of three shapes (Section~\ref{sec:case-lidar}). The state space includes the position of the LiDAR sensor and the shape of the sign. We tile the state space, project each tile into the input space via the LiDAR observation process, and again apply techniques from robustness verification to verify the network, including identifying regions of the state and input space where the network may deliver an incorrect classification.

\subsection{Contributions}
This paper makes the following contributions:

\paragraph{Specification:}
We show how to use state spaces and observation processes to specify the correctness of neural networks for perception - the specification identifies the target inputs of interest and the correct output for each target input. To our knowledge, this is the first systematic approach to provide a global correctness specification for perception neural networks.

\paragraph{Verification:} We present an algorithm, \textit{Tiler}, for correctness verification. With state spaces and observation processes providing the specification, this is the first algorithm (to our knowledge) for verifying that a neural network produces the {\em correct} output (up to a specified tolerance) for {\em every} target input of interest. The algorithm can also compute tighter correctness bounds for focused regions of the state and input spaces.

\paragraph{Detecting inputs outside the target input space:} We show how to use the specification to detect and flag inputs that are not within the verified target input space.

\paragraph{Case Studies:} We apply the framework to specify and verify neural networks that predicts camera position from an input image and neural networks that classifies the shape of the sign from input LiDAR measurements.

\section{Related Work}
\label{sec:related-work}

Motivated by the vulnerability of neural networks to adversarial attacks \cite{adversarial2016,adversarial2014}, researchers have developed a range of techniques for defense via adversarial training \cite{Goodfellow2014,Madry2017,Zhu2017Adv,Jin2020}, and further developed techniques for verifying robustness --- they aim to verify if the neural network prediction is stable in some neighborhoods around selected input points. \cite{salman2019convex,verify-survey-2019} provide an overview of the field. A range of approaches have been explored, including layer-by-layer reachability analysis \cite{exactreach-2017,maxsens2018} with abstract interpretation \cite{ai2-2018} or bounding the local Lipschitz constant \cite{Fastlin2018}, formulating the network as constraints and solving the resulting constrained optimization problem \cite{ILP2016,NSVerify2017,Cheng2017,tjeng2019,singh2018robustness,NIPS2019Singh}, solving the dual problem \cite{duality2018,ConvDual2018,certify2018}, and formulating and solving using SMT/SAT solvers \cite{Reluplex2017,Planet2017,Huang2017,eevbnn}. When the adversarial region is large, several techniques divide the domain into smaller subdomains and verify each of them \cite{Singh2019,bunel2017unified}.
Unlike this prior research, which aims to verify that the prediction of a network is {\em stable} in some {\em local} regions around labelled points, the research presented in this paper introduces a framework to verify that a neural network for perception computes {\em correct} outputs within a specified tolerance {\em globally} over a target input space as specified by a symbolic state space and an observation process.

Besides robustness against norm-bounded perturbations, several works also explore the verification/certification of robustness against other types of perturbations, e.g. geometric transformations \cite{CertGeometric2019,Singh2019}. Unlike this prior research, which aims to verify that the prediction of a network does not change in a {\em local} region defined by some transformations on the {\em input} to the neural network, the research presented in this paper introduces {\em state space} and {\em observation process} to specify and verify correctness over the {\em whole target input space}.

Besides robustness, researchers have also explored verification of several other properties. One line of work focuses on properties of systems using neural network controllers. \cite{control2018} presents an approach that computes the reachable states and verifies region stability for closed-loop systems with neural network controllers. \cite{control-safety2018} verifies safety by checking that all reachable states are safe. \cite{verisig-18} transforms the neural network into an equivalent hybrid system and verifies closed-loop properties. \cite{art-19} further integrates the verification/certification results into training of the neural network controllers. \cite{shield-19} synthesizes a deterministic program as a shield to the neural network controller and verifies the safety of the combined system.

Another property explored by prior work is fairness. \cite{fairsquare-17} verifies fairness properties of machine learning models using numerical integration. \cite{fairness-19} uses adaptive sampling to obtain an algorithm that has better scalability.

In both the controller and fairness settings, the properties to verify can be directly specified in terms of input-output relationships of the neural network, since the input to the neural network has explicit semantic meanings: in controller settings the input corresponds to the system state, and in fairness settings the input contains explicit features such as gender or race. The research presented in this paper, on the other hand, focuses on perception, where the correctness property cannot be specified directly using input-output relationships. We therefore introduce the state space and observation process to make such specification (and verification) feasible.

Prior work on neural network testing focuses on constructing better test cases to expose problematic network behaviors. Researchers have developed approaches to build test cases that improve coverage on possible states of the neural network, for example neuron coverage \cite{DeepXplore2017,DeepTest2018} and generalizations to multi-granular coverage \cite{DeepGauge2018} and MC/DC \cite{MCDC2001} inspired coverage \cite{MCDCtest2018}. \cite{TensorFuzz2018} presents coverage-guided fuzzing methods for testing neural networks using the above coverage criteria. \cite{DeepTest2018} generates realistic test cases by applying natural transformations (e.g. brightness change, rotation, add rain) to seed images. \cite{okelly2018scalable} uses simulation to test autonomous driving systems with deep learning based perception. \cite{scenic-pldi19} designs a probabilistic programming language to specify scenes that helps training and testing neural perception systems. Unlike this prior research, which focuses on generating useful test cases for the neural network, the research presented in this paper verifies the correctness of the neural network for all inputs in the target input space.

\section{Correctness Verification of Neural Networks}
\label{sec:formalization}

Consider the general perception problem of taking an input observation $x$ and to predict some quantity of interest $y$. It can be a regression problem (continuous $y$) or a classification problem (discrete $y$). Some neural network model is trained for this task. We denote its function by $f: \mathcal{X} \rightarrow \mathcal{Y}$, where $\mathcal{X}$ is the space of all possible inputs to the neural network and $\mathcal{Y}$ is the space of all possible outputs. Behind the input observation $x$ there is some state of the world $s$. Denote $\mathcal{S}$ as the space of all states of the world that the network is expected to work in. For each state of the world, a set of possible inputs can be observed. We denote this {\em observation process} using a mapping $g: \mathcal{S} \rightarrow \mathcal{P}(\tilde{\mathcal{X}})$,
where $g(s)$ is the set of inputs that can be observed from $s$. Here $\mathcal{P}(\cdot)$ is the power set, and $\tilde{\mathcal{X}} \subseteq \mathcal{X}$ is the \textit{target input space}, the part of input space that may be observed from the {\em state space} $\mathcal{S}$. Concretely, $\tilde{\mathcal{X}} = \{ x | \exists s \in \mathcal{S}, x \in g(s) \}$.

The quantity of interest $y$ is some attribute of the state of the world. We denote the ground truth of $y$ using a function $\lambda:\mathcal{S}\rightarrow \mathcal{Y}$. This specifies the ground truth for each input, which we denote as a mapping $\hat{f}: \tilde{\mathcal{X}} \rightarrow \mathcal{P}(\mathcal{Y})$. $\hat{f}(x)$ is the set of possible ground truth values of $y$ for a given $x$:
    \begin{equation}
    \hat{f}(x) = \{ y | \exists s \in \mathcal{S}, y=\lambda(s), x \in g(s) \}.
    \end{equation}

The target input space $\tilde{\mathcal{X}}$ and the ground truth mapping $\hat{f}$ together form a {\em specification}. In general, we cannot compute and represent $\tilde{\mathcal{X}}$ and $\hat{f}$ directly --- indeed, one purpose of the neural network is to compute an approximation to this ground truth $\hat{f}$ which, in general, is not available given only the input $x$. $\tilde{\mathcal{X}}$ and $\hat{f}$ are instead determined implicitly by $\mathcal{S}$, $g$, and $\lambda$.

The error of the neural network is then characterized by the difference between $f$ and $\hat{f}$. Concretely, the maximum possible error at a given input $x \in \tilde{\mathcal{X}}$ is:
    \begin{equation}
    \label{eqn:error}
    e(x) = \max_{y\in \hat{f}(x)} d(f(x), y),
    \end{equation}
where $d(\cdot, \cdot)$ is some measurement on the size of the error between two values of the quantity of interest. In this paper, for regression, we consider the absolute value of the difference $d(y_1, y_2) = |y_1 - y_2|$.\footnote{For clarity, we formulate the problem with a one-dimensional quantity of interest. Extending to multidimensional output (multiple quantities of interest) is straightforward: we treat the prediction of each output dimension as using a separate neural network, all of which are the same except the final output layer.} For classification, we consider a binary error measurement $d(y_1, y_2) = \mathbbm{1}_{y_1 \neq y_2}$ (indicator function), i.e. the error is 0 if the prediction is correct, 1 if the prediction is incorrect. Other error measures can also be used depending on the requirement.

The goal of correctness verification is to verify that the maximum error in the network prediction with respect to the specification is less than some threshold $\Delta$ (determined by the requirement from the system in which the neural network is used). We formulate the problem of correctness verification formally here:

\textit{Given a trained neural network $f$ and a specification $(\tilde{\mathcal{X}}, \hat{f})$ determined implicitly by $\mathcal{S}$, $g$, and $\lambda$, verify that error $e(x) \leq \Delta$ for any target input $x\in \tilde{\mathcal{X}}$.}

In addition, our research also applies to the problem of verifying correctness in more localized or focused regions of the target input space. It is not always possible to have a neural network that is correct in the whole target input space, so being able to verify subregions of the target input space and identify if an input is within such subregions is an important problem.

\section{\textit{Tiler}}
\label{sec:framework}

We next present \textit{Tiler}, an algorithm for correctness verification of neural networks.
We present here the algorithm for regression settings, with sufficient conditions for the verification to be sound. The algorithm for classification settings is similar (see Appendix \ref{sec:classification}).

\noindent \textbf{Step 1:} Divide the state space $\mathcal{S}$ into \textit{state tiles} $\{\mathcal{S}_i \}$ such that $\cup_i \mathcal{S}_i  = \mathcal{S}$.

The image of each $\mathcal{S}_i$ under $g$ gives an \textit{input tile} (a tile in the input space): $\mathcal{X}_i = \{ x | x \in g(s), s \in \mathcal{S}_i \}$. The resulting tiles $\{\mathcal{X}_i\}$ satisfy the following condition:

\textit{Condition 4.1.} $\tilde{\mathcal{X}} \subseteq \cup_i \mathcal{X}_i$.
\\

\noindent \textbf{Step 2:} For each $\mathcal{S}_i$, compute the ground truth bound as an interval $[l_i, u_i]$, such that $\forall s \in \mathcal{S}_i, l_i \leq \lambda(s) \leq u_i$.

The bounds computed this way satisfy the following condition, which (intuitively) states that the possible ground truth values for an input point must be covered jointly by the ground truth bounds of all the input tiles that contain this point:

\textit{Condition 4.2(a).} For any $x \in \tilde{\mathcal{X}}, \forall y \in \hat{f}(x), \exists \mathcal{X}_i $ such that $ x\in \mathcal{X}_i$ and $l_i \leq y \leq u_i$.
\\

\noindent Previous research has produced a variety of methods that bound the neural network output over given input regions. Examples include layer-by-layer reachability analysis \cite{maxsens2018,ai2-2018,Fastlin2018} and formulating constrained optimization problems \cite{ILP2016,NSVerify2017,tjeng2019}. Another type of approach verifies that the neural network output is within some range over given input regions by formulating and solving SAT/SMT formulae \cite{Reluplex2017,Huang2017,eevbnn}. Each method typically works for certain classes of networks (e.g. piece-wise linear networks) and certain classes of input regions (e.g. polytopes). For each input tile $\mathcal{X}_i$, we therefore introduce a bounding box $\mathcal{B}_i$ that 1) includes $\mathcal{X}_i$ and 2) is supported by the solving method:

\noindent \textbf{Step 3:} Using $\mathcal{S}_i$ and $g$, compute a bounding box $\mathcal{B}_i$ for each tile $\mathcal{X}_i = \{ x | x \in g(s), s \in \mathcal{S}_i \}$.

The bounding boxes $\mathcal{B}_i$'s must satisfy the following condition:

\textit{Condition 4.3.} $\forall i, \mathcal{X}_i \subseteq \mathcal{B}_i$.
\\

\noindent \textbf{Step 4:} Given $f$ and bounding boxes $\{ \mathcal{B}_i \}$, use an appropriate solver to solve for the network output ranges $\{ [l'_i, u'_i] \}$.

The neural network has a single output entry for each quantity of interest. Denote the value of the output entry as $o(x)$, $f(x) = o(x)$. The network output bounds $(l'_i, u'_i)$ returned by the solver must satisfy the following condition:

\textit{Condition 4.4(a)} $\forall x \in \mathcal{B}_i, l'_i \leq o(x) \leq u'_i$.
\\

\noindent \textbf{Step 5:} For each tile, use the ground truth bound $(l_i, u_i)$ and network output bound
$(l'_i, u'_i)$ to compute the error bound $e_i$:
\begin{equation}
\label{eqn:regression-tile-error}
    e_i = \max (u'_i-l_i, u_i-l'_i).
\end{equation}

$e_i$ gives the upper bound on prediction error when the state of the world $s$ is in $\mathcal{S}_i$. This is because $(l_i, u_i)$ covers the ground truth values in $\mathcal{S}_i$, and $(l'_i, u'_i)$ covers the possible network outputs for all inputs that can be generated from $\mathcal{S}_i$.
From these error bounds $\{e_i\}$, we compute a global error bound:
\begin{equation}
\label{eqn:regression-global}
    e_{\text{global}} = \max_{i} e_i.
\end{equation}

We can also compute a local error bound for any target input $x\in \tilde{\mathcal{X}}$:
\begin{equation}
\label{eqn:regression-local}
    e_{\text{local}}(x) = \max_{\{i | x\in \mathcal{B}_i\}} e_i.
\end{equation}

We store the bounding boxes $\mathcal{B}_i$'s as part of the results computed by \textit{Tiler}, so it is convenient to check containment of x in $\mathcal{B}_i$'s.

These error bounds can then be compared with the threshold $\Delta$ to decide whether the network is verified or not. Notice that we can also use SAT/SMT based approaches. For these approaches, we will combine step 4 and 5 and formulate the verification target as clauses on network output based on ground truth bounds $[l_i, u_i]$ and error threshold $\Delta$.

\begin{theorem}[Soundness of local error bound for regression]
\label{theorem-regression-local}
Given that Condition 4.1, 4.2(a), 4.3, and 4.4(a) are satisfied, then $\forall x \in \tilde{\mathcal{X}}$, $e(x) \leq e_{\text{local}}(x)$, where $e(x)$ is defined in Equation \ref{eqn:error} and $e_{\text{local}}(x)$ is computed from Equation \ref{eqn:regression-tile-error} and \ref{eqn:regression-local}.
\end{theorem}

\begin{theorem}[Soundness of global error bound for regression]
\label{theorem-regression-global}
Given that Condition 4.1, 4.2(a), 4.3, and 4.4(a) are satisfied, then $\forall x \in \tilde{\mathcal{X}}$, $e(x) \leq e_{\text{global}}$, where $e(x)$ is defined in Equation \ref{eqn:error} and $e_{\text{global}}$ is computed from Equation \ref{eqn:regression-tile-error} and \ref{eqn:regression-global}.
\end{theorem}

The proofs are given in Appendix \ref{sec:proof-append}.


Algorithm \ref{alg:tiler} formally presents the \textit{Tiler} algorithm for regression. The implementations of \textproc{DivideStateSpace}, \textproc{GetGroundTruthBound}, and \textproc{GetBoundingBox} are problem dependent. The choice of \textproc{Solver} needs to be compatible with $\mathcal{B}_i$ and $f$. Conditions 4.1 to 4.4 specify the sufficient conditions for the returned results from these four methods such that the guarantees obtained are sound.

\begin{algorithm}
\small
\caption{Tiler (for regression)}
\label{alg:tiler}
\begin{algorithmic}[1]
\Require $\mathcal{S}, g, \lambda, f$
\Ensure $e_{\text{global}}, \{e_i \}, \{\mathcal{B}_i \}$
     \Procedure{Tiler}{$\mathcal{S}, g, \lambda, f$}
          \State $\{\mathcal{S}_i \} \gets$ \Call{DivideStateSpace}{$\mathcal{S}$} \Comment{Step 1}
          \For{each $\mathcal{S}_i$}
            \State $(l_i, u_i) \gets$ \Call{GetGroundTruthBound}{$\mathcal{S}_i, \lambda$} \Comment{Step 2}
            \State $\mathcal{B}_i \gets$ \Call{GetBoundingBox}{$\mathcal{S}_i, g$} \Comment{Step 3}
            \State $(l'_i, u'_i) \gets$ \Call{Solver}{$f, \mathcal{B}_i$} \Comment{Step 4}
            \State $e_i \gets$ $\max (u'_i-l_i, u_i-l'_i)$ \Comment{Step 5}
          \EndFor
          \State $e_{\text{global}} \gets$ {$\max (\{e_i \})$} \Comment{Step 5}
          \State \textbf{return} $e_{\text{global}}, \{e_i \}, \{\mathcal{B}_i \}$ \Comment{$\{e_i \}, \{\mathcal{B}_i \}$ can be used later to compute $e_{\text{local}}(x)$}
     \EndProcedure
\end{algorithmic}
\end{algorithm}

The complexity of this algorithm is determined by the number of tiles, which scales with the dimension of the state space $\mathcal{S}$. Because the computations for each tile are independent, our \textit{Tiler} implementation executes these computations in parallel.

\subsection{Dealing with noisy observation}
Our formulation also applies to the case of noisy observations. Notice that the observation process $g$ maps from a state to a set of possible inputs, so noise can be incorporated here. The above version of \textit{Tiler} produces \textit{hard} guarantees, i.e. the error bounds computed are valid for all cases. This works for observations with bounded noise. For cases where noise is unbounded (e.g. Gaussian noise), \textit{Tiler} can be adjusted to provide probabilistic guarantees: we compute bounding boxes $\mathcal{B}_i$ such that $P(x\in \mathcal{B}_i|x\sim g(s) , s\in \mathcal{S}_i)>1-\epsilon$ for some small $\epsilon$. Here we also need the probability measure associated with the observation process --- $g(s)$ now gives the probability distribution of input $x$ given state $s$. This will give an error bound that holds with probability at least $1-\epsilon$ for any state in this tile. We demonstrate how this is achieved in practice in the second case study (Section \ref{sec:case-lidar}).

\subsection{Detecting illegal inputs}

\textit{Tiler} provides a way to verify the correctness of the neural network over the whole target input space $\tilde{\mathcal{X}}$. We also provide a method to detect whether a new observed input is within $\tilde{\mathcal{X}}$ (the network is designed to work for it, and we have guaranteed correctness) or not (the network is not designed for it, so we don't have guarantees). In general, checking containment directly with $\tilde{\mathcal{X}}$ is hard, since there is no explicit representation of it. Instead, we use the bounding boxes $\{\mathcal{B}_i \}$ returned by \textit{Tiler} as a proxy for $\tilde{\mathcal{X}}$: for a new input $x^*$, we check if $x^*$ is contained in any of the $\mathcal{B}_i$'s. Since the network output ranges computed in Step 4 cover the inputs in each $\mathcal{B}_i$, and the error bounds incorporate the network output ranges, we know that the network output will not have unexpected drastic changes in $\mathcal{B}_i$'s. This makes $\mathcal{B}_i$'s a good proxy for the space of legal inputs.

Searching through all the $\mathcal{B}_i$'s can introduce a large overhead. We propose a way to speed up the search by utilizing the network prediction and the verified error bounds. For example for regression settings, given the network prediction $y^*=f(x^*)$ and the global error bound $e_{\textrm{global}}$, we can prune the search space by discarding tiles that do not overlap with $[y^* - e_{\textrm{global}}, y^* + e_{\textrm{global}}]$ in the ground truth attribute. The idea is that we only need to search the local region in the state space that has ground truth attribute close to the prediction, since we have verified the bound for the maximum prediction error. We evaluate this detection method and the prediction-guided search in the first case study (Section \ref{sec:case-study-1}).

\subsection{Adaptive Tiling}
\label{sec:adaptive}

For the tiling step, one strategy is to have a fixed tiling where the user specifies the size for each tile. As we show in the experimental results in Section \ref{sec:results}, with fixed tiling, tile sizes affect the tradeoff between tightness of verification and verification time: with smaller tile sizes, \textit{Tiler} verifies tighter error bounds (thus higher percentage of regions) but at a cost of longer verification time. Motivated by this tradeoff, we implement an adaptive tiling strategy. We start with a coarse tile size and run \textit{Tiler}. For tiles that \textit{Tiler} fails to verify or the solver exceeds the time limit, we divide those tiles further and run \textit{Tiler} on them again. This process runs iteratively until either the tile is verified or some pre-set minimum tile size is reached. Our results show that this adaptive tiling strategy achieves a better tradeoff between verification tightness and time and alleviates the need to choose a proper tile size.

\section{Case Studies}
\label{sec:case-study}

We perform experiments on two case studies. The first predicts the position of a camera with respect to a road given images of the road. The second classifies the shape of a sign given LiDAR measurements. We present here the problem setups and our experimental methods.\footnote{Code: \href{https://github.com/yycdavid/TilerVerify}{https://github.com/yycdavid/TilerVerify}}

\subsection{Case Study 1: Position Measurement from Road Scene}
\label{sec:case-study-1}

This case study considers a world containing a road with a centerline, two side lines, and a camera taking images of the road. The camera is at a fixed height above the road, but can vary its horizontal offset and viewing angle with respect to the centerline of the road. The task of the neural network is to predict the camera position given images of the road.

\subsubsection{Scene}
\label{sec:scene-1}

Figure \ref{fig:camera-setup} presents a schematic of the scene. The state of the world $s$ is characterized by the offset $\delta$ and angle $\theta$ of the camera position. We therefore label the states as $s_{\delta,\theta}$. We consider the camera position between the range $\delta \in [-40, 40]$ (length unit of the scene, the road width from the centerline to the side lines is 50 units) and $\theta \in [-60\degree, 60\degree]$, so the target state space $\mathcal{S}=\{s_{\delta,\theta} | \delta \in [-40, 40], \theta \in [-60\degree, 60\degree] \}$.

\begin{figure*}
  \centering
  \parbox{\figrasterwd}{
    \parbox{.2\figrasterwd}{%
    \subcaptionbox{\label{fig:camera-setup}}{\includegraphics[width=\hsize]{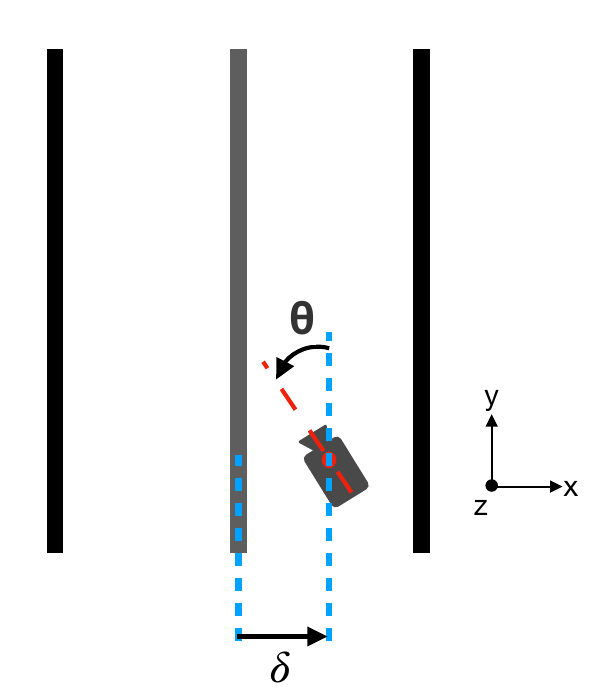}}
      \begin{center}
      \subcaptionbox{\label{fig:exp-images}}{

          \includegraphics[width=.5\hsize]{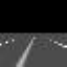}
      }
      \end{center}
    }
    \hskip1em
    \parbox{.65\figrasterwd}{%
      \subcaptionbox{\label{fig:schematics}}{\includegraphics[width=\hsize]{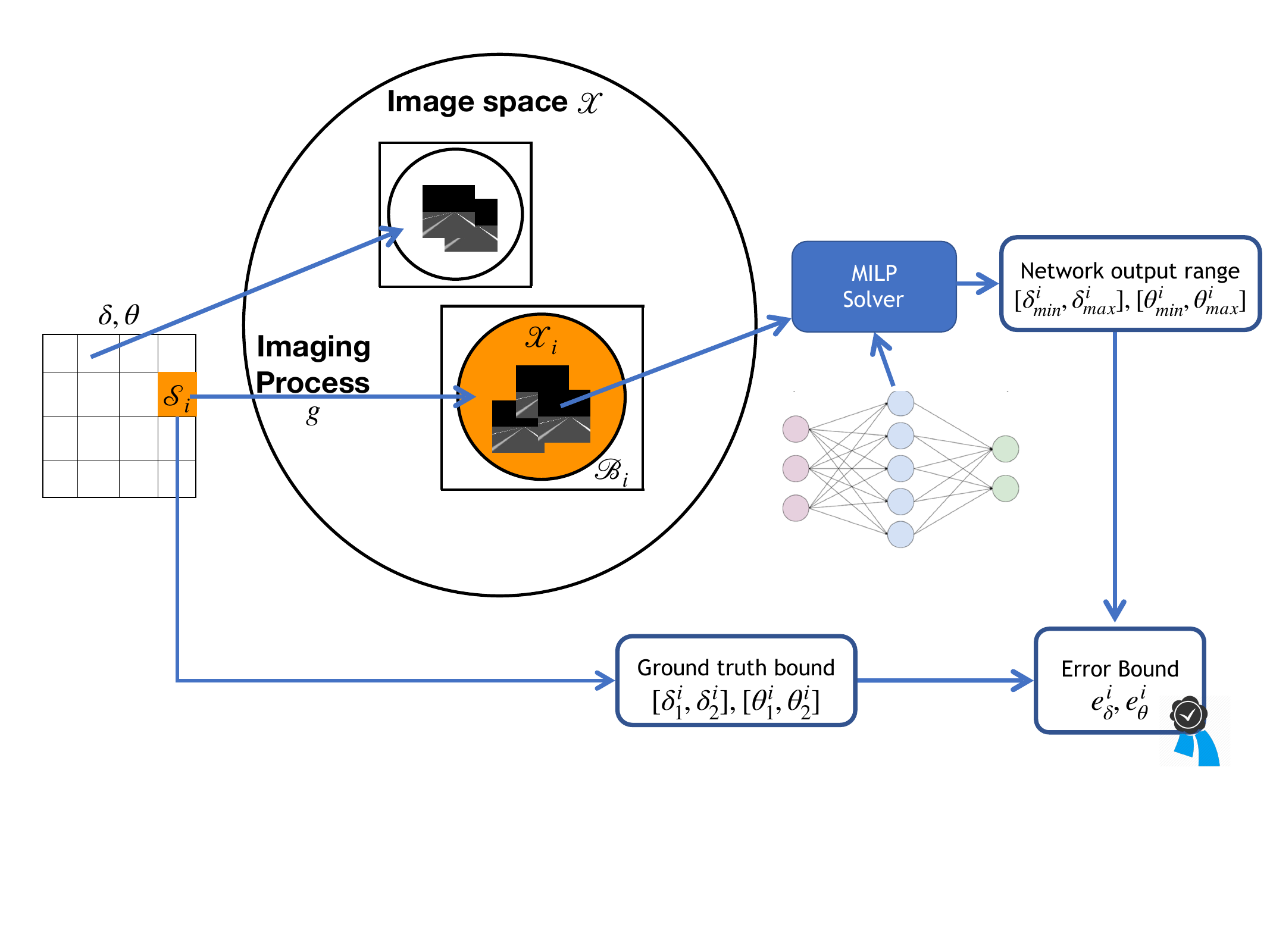}}
    }
  }
  \caption{(a) Schematic of the scene. (b) Example image taken by the camera. (c) Schematics of \textit{Tiler} on the road scene case.}
\end{figure*}

\subsubsection{Observation Process}
\label{sec:graphics-process}

The input $x$ to the neural network is the image taken by the camera. The observation process $g$ is the camera imaging process. For each pixel, we shoot a ray from the center of that pixel through the camera focal point and compute the intersection of the ray with objects in the scene. The intensity of that intersection point is taken as the intensity of the pixel. The resulting $x$'s are 32$\times$32 gray scale images with intensities in $[0, 255]$ (see Appendix \ref{sec:scene-append} and \ref{sec:camera-imaging-append} for detailed descriptions of the scene and the camera imaging process). Figure \ref{fig:exp-images} presents an example image. The target input space $\tilde{\mathcal{X}}$ is the set of all images that can be taken with $\delta \in [-40, 40]$ and $\theta \in [-60\degree, 60\degree]$.

The quantity of interest $y$ is the camera position $(\delta, \theta)$. The ground truth function $\lambda$ is simply $\lambda(s_{\delta,\theta}) = (\delta,\theta)$.

\subsubsection{Neural Network}

For the neural network, we use the same ConvNet architecture as $\small{\textbf{CNN}_{\textbf{A}}}$ in \cite{tjeng2019} and the \textit{small} network in \cite{wong2018}. It has 2 convolutional layers (size 4$\times$4, stride 2) with 16 and 32 filters respectively, followed by a fully connected layer with 100 units. All the activation functions are ReLUs. The output layer is a linear layer with 2 output nodes, corresponding to the predictions of $\delta$ and $\theta$. The network is trained on 130k images and validated on 1000 images generated from our imaging process. The camera positions of the training and validation images are sampled uniformly from the range $\delta \in [-50, 50]$ and $\theta \in [-70\degree, 70\degree]$. The network is trained with an $l_1$-loss function, using \textit{Adam} \cite{kingma2014} (see Appendix \ref{sec:training-detail} for more training details).

For error analysis, we treat the predictions of $\delta$ and $\theta$ separately. The goal is to find upper bounds on the prediction errors $e_{\delta}(x)$ and $e_{\theta}(x)$ for any target input $x\in \tilde{\mathcal{X}}$.

\subsubsection{\textit{Tiler}}
Figure \ref{fig:schematics} presents a schematic of how we apply \textit{Tiler} to this problem. Tiles are constructed by dividing $\mathcal{S}$ on $(\delta,\theta)$ into a grid of equal-sized rectangles with length $a$ and width $b$. Each cell in the grid is then $\mathcal{S}_i = \{s_{\delta,\theta} | \delta \in [\delta_1^i, \delta_2^i], \theta \in [\theta_1^i, \theta_2^i]\}$, with $\delta_2^i - \delta_1^i = a$ and $\theta_2^i - \theta_1^i = b$. Each tile $\mathcal{X}_i$ is the set of images that can be observed from $\mathcal{S}_i$. The ground truth bounds can be naturally obtained from $\mathcal{S}_i$: for $\delta$, $l_i = \delta_1^i$ and $u_i = \delta_2^i$; for $\theta$, $l_i = \theta_1^i$ and $u_i = \theta_2^i$.

We next encapsulate each tile $\mathcal{X}_i$ with an $l_{\infty}$-norm ball $\mathcal{B}_i$ by computing, for each pixel, the range of possible values it can take within the tile. As the camera position varies in a cell $\mathcal{S}_i$, the intersection point between the ray from the pixel and the scene sweeps over a region in the scene. The range of intensity values in that region determines the range of values for that pixel. We compute this region for each pixel, then find the pixel value range (see Appendix \ref{sec:compute-range} for more details on this method). The resulting $\mathcal{B}_i$ is an $l_{\infty}$-norm ball in the image space covering $\mathcal{X}_i$, represented by 32$\times$32 pixel-wise ranges.

To solve the range of outputs of the ConvNet for inputs in the $l_{\infty}$-norm ball, we adopt the MILP-based approach from \cite{tjeng2019} (Appendix \ref{sec:milp}). We adopt the same formulation but change the MILP objectives. For each $l_{\infty}$-norm ball, we solve 4 optimization problems: maximizing and minimizing the output entry for $\delta$, and another two for $\theta$. Denote the objectives solved as $\delta_{min}^i$, $\delta_{max}^i$, $\theta_{min}^i$, $\theta_{max}^i$.

We then use Equation \ref{eqn:regression-tile-error} to compute the error bounds for each tile: $e_{\delta}^i = \max (\delta_{max}^i - \delta_1^i, \delta_2^i - \delta_{min}^i)$, $e_{\theta}^i = \max (\theta_{max}^i - \theta_1^i, \theta_2^i - \theta_{min}^i)$. $e_{\delta}^i$ and $e_{\theta}^i$ give upper bounds on prediction error when the state of the world $s$ is in cell $\mathcal{S}_i$. These error bounds can later be used to compute global and local error bounds as in Equation \ref{eqn:regression-global} and \ref{eqn:regression-local}.

We experiment with both fixed and adaptive tiling strategies. For the fixed strategy, the state space is divided into tiles with a fixed size. For the adaptive strategy, the state space is initially divided into large tiles and solved by \textit{Tiler}. Then, for tiles where the maximum errors solved is larger than the required threshold or the solver exceeds its time limit, we further divide them in both $\delta$ and $\theta$ to get 4 smaller tiles and solve again. This is performed iteratively until either the maximum errors solved meet the requirement, or a minimum tile size is reached.

We implement the detector for illegal inputs by checking if the new input $x^*$ is contained in any of the bounding boxes $\mathcal{B}_i$. We compare the performance of the naive search with the state-guided search.

\subsection{Case Study 2: Shape Classification from LiDAR Sensing}
\label{sec:case-lidar}

The world in this case contains a planar sign standing on the ground. There are 3 types of signs with different shapes: square, triangle, and circle (Figure \ref{fig:sign}). A LiDAR sensor takes measurement of the scene, which is used as input to a neural network to classify the shape of the sign.

\subsubsection{Scene}
\label{sec:lidar-scene}

The sensor can vary its distance $d$ and angle $\theta$ with respect to the sign, but its height is fixed, and it is always facing towards the sign. Figure \ref{fig:lidar-schematic} shows the schematic of the set-up. Assume the working zone for the LiDAR sensor is with position $d\in[30, 60]$ (unit length) and $\theta \in [-45\degree, 45\degree]$. Then the state space $\mathcal{S}$ has 3 dimensions: two continuous ($d$ and $\theta$), and one discrete (sign shape $c$). Concretely, the state space is given by $\mathcal{S}=\{s_{d,\theta, c} | d \in [30, 60], \theta \in [-45\degree, 45\degree], c \in \{0,1,2\} \}$, where the state of the world is labeled as $s_{d,\theta, c}$ and $\{0,1,2\}$ denotes the shape classes.

\subsubsection{Observation Process}
\label{sec:lidar-model}

The observation process $g$ is the LiDAR measurement model. The LiDAR sensor emits an array of 32$\times$32 laser beams in fixed directions. The measurement from each beam is the distance to the first object hit in that direction. We consider a LiDAR measurement model where the maximum distance that can be measured is \texttt{MAX\_RANGE}=300, and the measurement has a Gaussian noise with zero mean and a standard deviation of 0.1\% of \texttt{MAX\_RANGE}. Appendix \ref{sec:lidar-scene} and \ref{sec:lidar-model} provides more details on the scene and LiDAR measurement model. 

The quantity of interest $y$ is the shape class of the sign $c$. Therefore the ground truth function is $\lambda(s_{d,\theta, c}) = c$.

\begin{figure*}
     \centering
      \parbox{\figrasterwd}{
      \parbox{.3\figrasterwd}{%
     \begin{subfigure}[b]{0.33\textwidth}
         \centering
         \includegraphics[width=\textwidth]{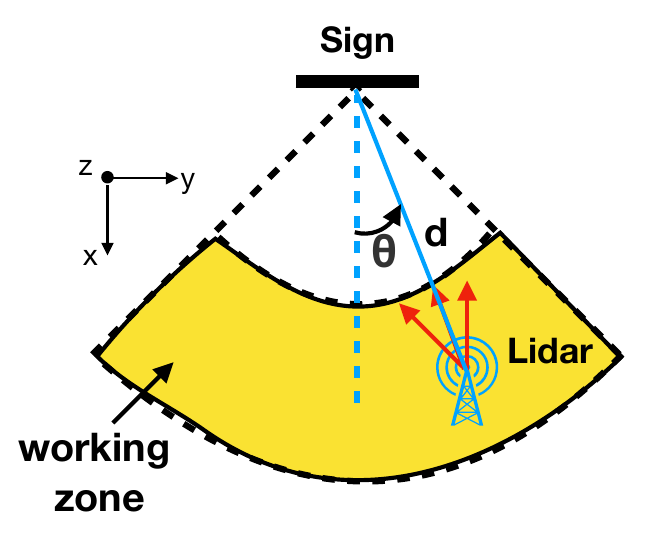}
         \vspace{-1em}
         \caption{}
         \label{fig:lidar-schematic}
     \end{subfigure}
     }
     \parbox{.3\figrasterwd}{
     \begin{subfigure}[b]{0.33\textwidth}
         \centering
         \includegraphics[width=.2\textwidth]{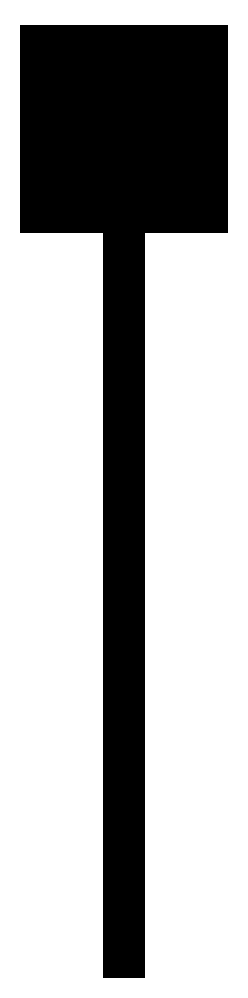}
         \includegraphics[width=.2\textwidth]{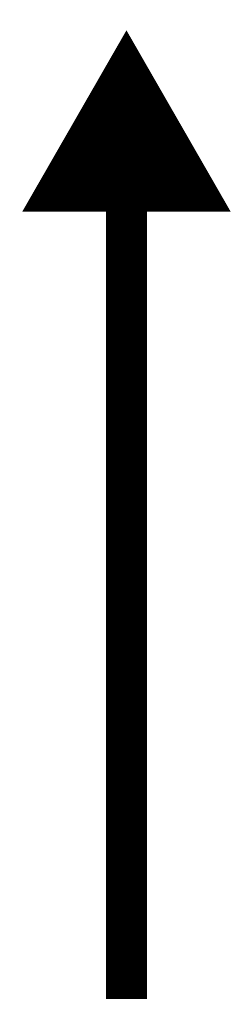}
         \includegraphics[width=.2\textwidth]{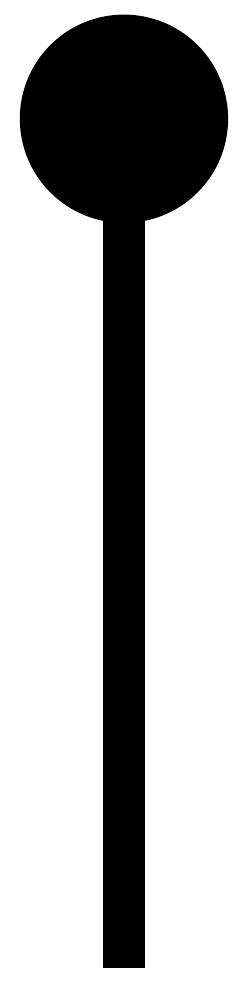}
         \caption{}
         \label{fig:sign}
     \end{subfigure}
    }\hskip-1em
    \parbox{.33\figrasterwd}{
    \begin{subfigure}[b]{0.4\textwidth}
    \centering
         \parbox{.18\figrasterwd}{
         \begin{center}
            {\includegraphics[width=.6\hsize]{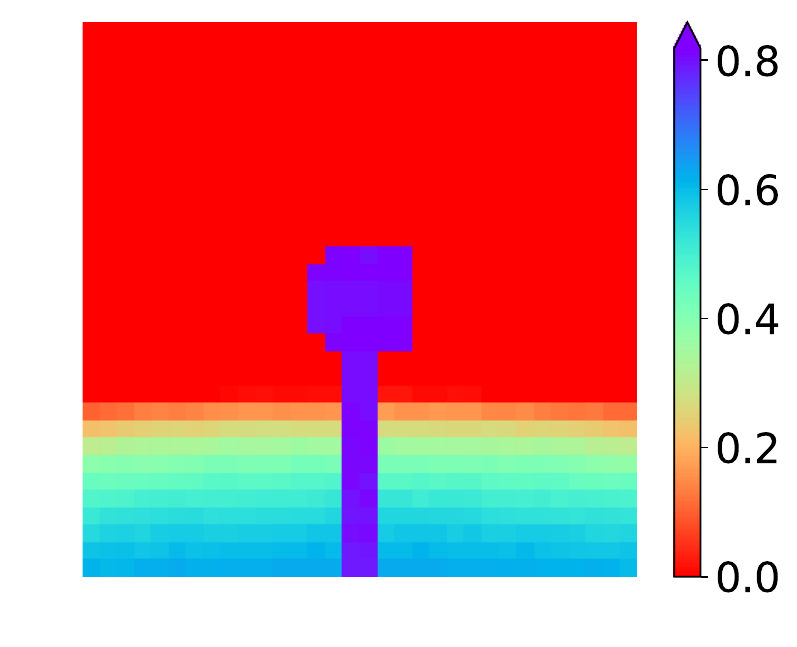}}
            {\includegraphics[width=.6\hsize]{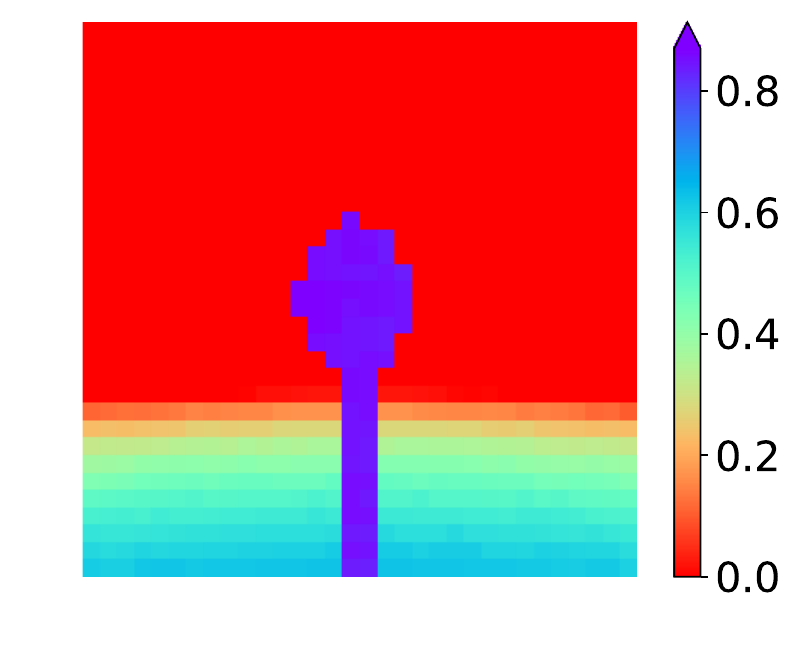}}
            {\includegraphics[width=.6\hsize]{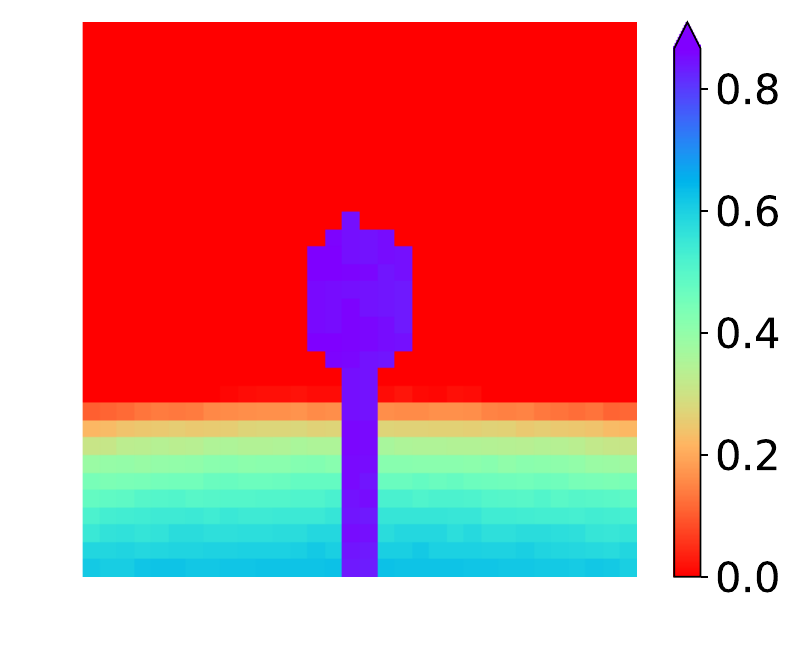}}
        \end{center}
         }\hskip-1em
         \parbox{.15\figrasterwd}{
         \begin{center}
         {\includegraphics[width=.8\hsize]{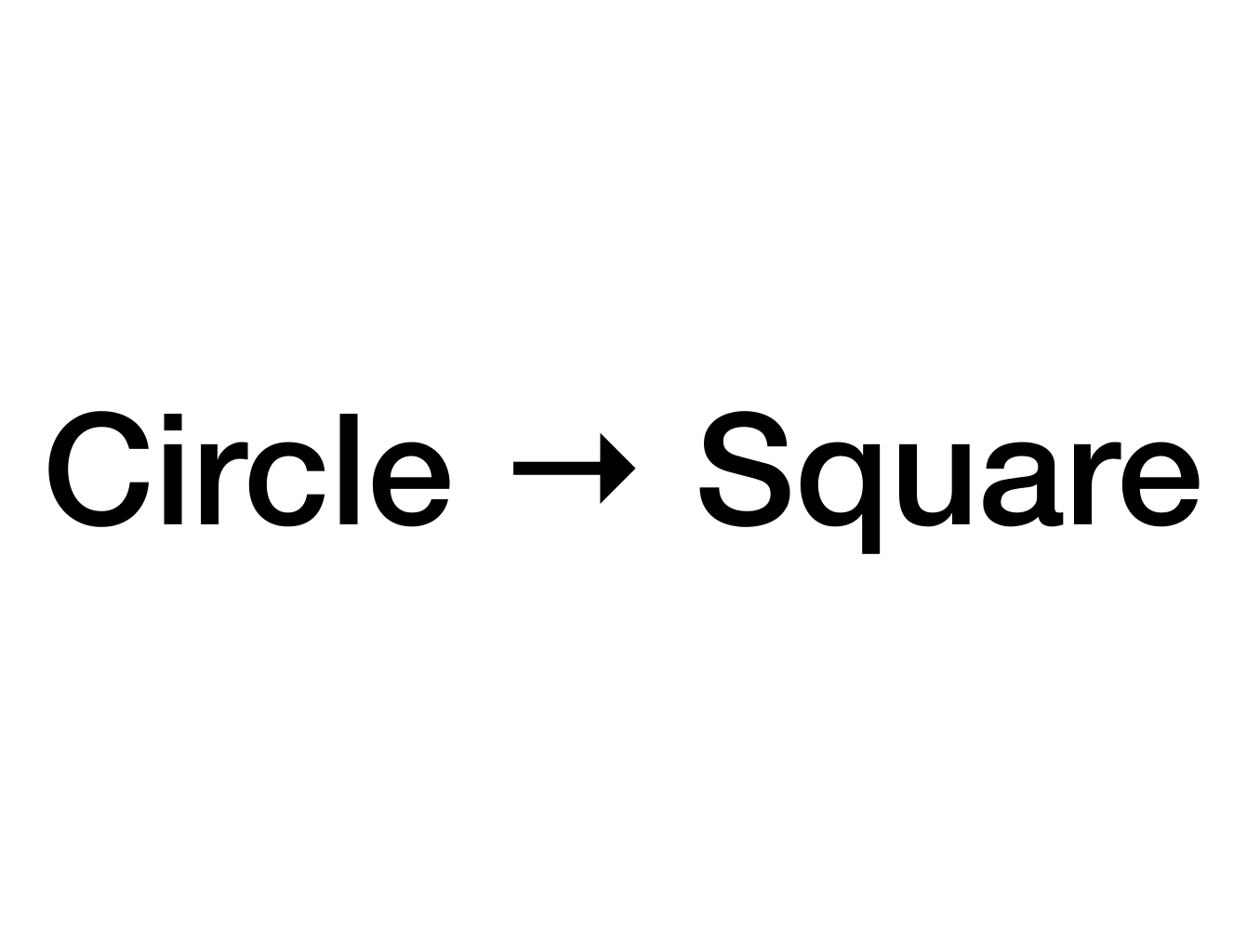}}
            {\includegraphics[width=.8\hsize]{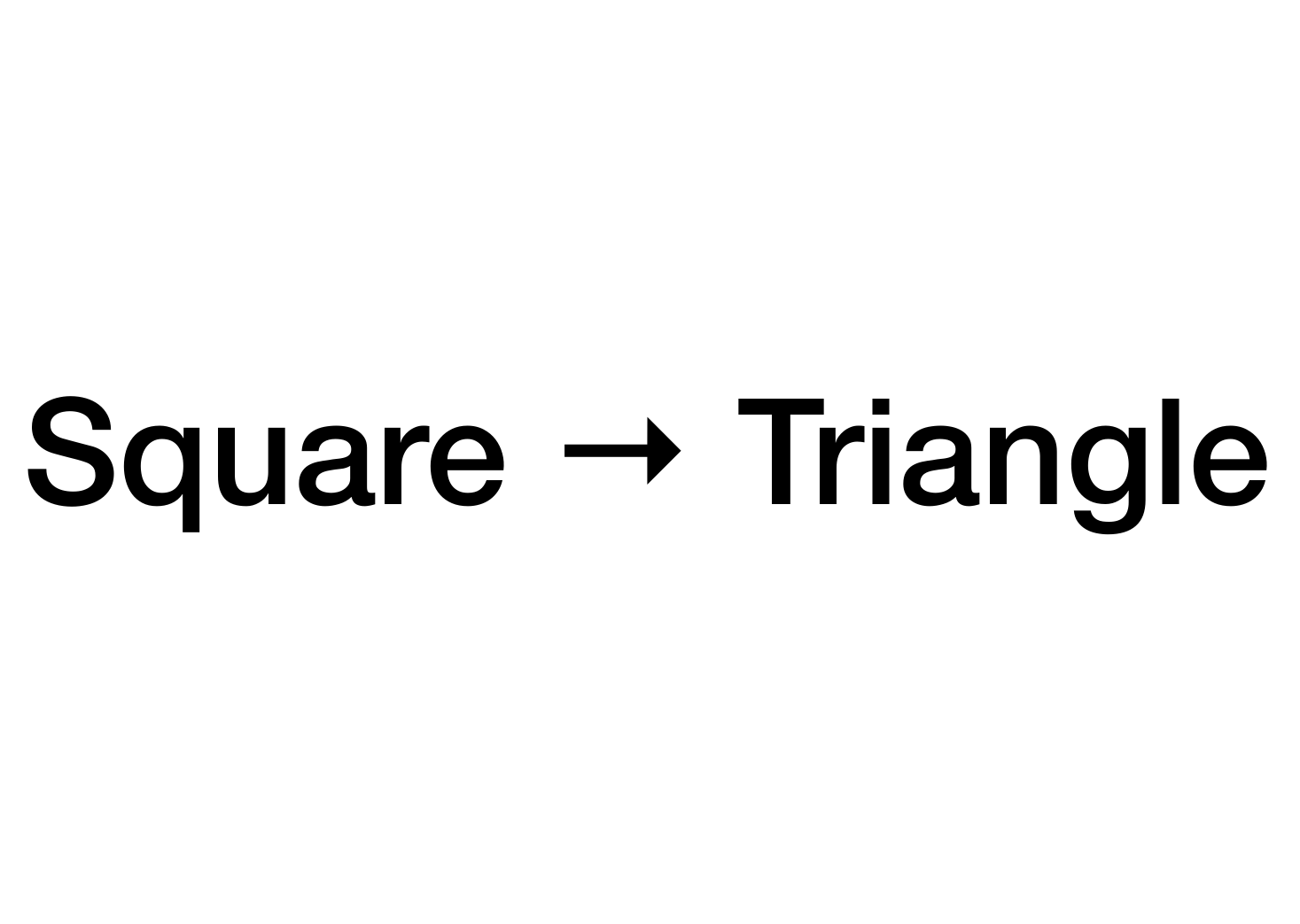}}
            {\includegraphics[width=.8\hsize]{circ-to-rec-text.pdf}}
        \end{center}
         }
         \caption{}
         \label{fig:adv-inputs}
     \end{subfigure}
     
    }
    }
        \caption{(a) Schematic for case study 2. (b) Shapes of the sign. (c) Example inputs in the bounding boxes that cause misclassification. The inputs are visualized using their values after preprocessing of the neural network. The left hand side is the ground truth label, and the right hand side is the misclassified label. }
\end{figure*}

\subsubsection{Neural Network}

We experiment with both a real-valued network and a binarized network for this case. For the real-valued network, we use a CNN with 2 convolutional layers (size 4$\times$4) with 16 filters, followed by a fully connected layer with 100 units. The output is a softmax layer with 3 entries, corresponding to the three shapes. For the binarized network, we use 2 convolutional layers (size 4$\times$4) with 16 and 32 filters respectively, followed by a fully connected layer with 100 units. The output is a linear layer with 3 output entries.

The distance measurements from LiDAR are preprocessed before feeding into both networks: first dividing them by \texttt{MAX\_RANGE} to scale to [0,1], then using 1 minus the scaled distances as inputs. This helps with the network training. We train the network using 50k points per class, and validating using 500 points per class. All the training and validation data are generated from points sampled randomly in the state space. We train the networks with cross-entropy loss, using \textit{Adam} \cite{kingma2014}. The real-valued network is trained with learning rate 0.01. The binarized network is trained with learning rate 0.0001.

\subsubsection{\textit{Tiler}}
The state tiles are constructed in each of the three shape subspaces ($c=0$, $c=1$, and $c=2$). We divide the $\theta$ dimension uniformly into $n_{\theta}$ intervals and the $d$ dimension uniformly in the inverse scale into $n_{d}$ intervals to obtain a grid with $n_{\theta}\cdot n_d$ cells per shape. Since each state tile lies within one of the shape subspaces, the ground truth bound for $\mathcal{S}_i$ is simply a singleton set containing the shape class of the subspace $\mathcal{S}_i$ lies in.

To compute the bounding box $\mathcal{B}_i$ for a given tile $\mathcal{S}_i$, we first find a lower bound and an upper bound on the distance of the object hit for each beam as the sensor position varies within that tile $\mathcal{S}_i$ (see Appendix \ref{sec:lidar-box} for details on the method to compute these bounds). We then extend this lower and upper bound by $5\sigma$, where $\sigma$ is the standard deviation of the Gaussian measurement noise. This way we have $P(x\in \mathcal{B}_i|x\sim g(s), s\in \mathcal{S}_i)\geq (P(|a| \leq 5\sigma | a \sim \mathcal{N}(0,\sigma^2)))^N > 0.999$, where $N=32\times 32$ is the input dimension. The factor 5 can be changed, depending on the required probabilistic guarantee. The resulting $\mathcal{B}_i$ is again an $l_{\infty}$-norm ball in the input space.

We apply the MILP method \cite{tjeng2019} (Appendix \ref{sec:milp}) to verify the real-valued network, and the SAT-based method \cite{eevbnn} (Appendix \ref{sec:eevbnn}) for the binarized network. For the real-valued network, we solve lower bounds on the differences between output scores, giving 2 optimization problems per tile. We then use Equation \ref{eqn:classification-tile-error} to decide whether the tile is verified to be correct or not. For the binarized network, we construct clauses using the output (according to Equation \ref{eqn:bnn-out}) and verify directly for each tile. We compare adaptive tiling with fixed tiling on both the percentage of regions verified and the verification speed.

\section{Experimental Results}
\label{sec:results}

\subsection{Case Study 1: Position Measurement from Road Scene}
We run \textit{Tiler} with a cell size of 0.1 (the side length of each cell in the $(\delta, \theta)$ grid is $a=b=0.1$). The step that takes the majority of time is the optimization solver. With parallelism, the optimization step takes about 15 hours running on 40 CPUs@3.00 GHz, solving $960000 \times 4$ MILP problems.

\subsubsection*{Global error bound}
We compute global error bounds by taking the maximum of $e_{\delta}^i$ and $e_{\theta}^i$ over all tiles. The global error bound for $\delta$ is 12.66, which is 15.8\% of the measurement range (80 length units for $\delta$); the global error bound for $\theta$ is $7.13\degree$ (5.94\% of the $120\degree$ measurement range). We therefore successfully verify the correctness of the network with these tolerances for all target inputs.

\subsubsection*{Error bound landscape}
We visualize the error bound landscape by plotting the error bounds of each tile as heatmaps over the $(\delta, \theta)$ space. Figures \ref{fig:offset-bound} and \ref{fig:angle-bound} present the results for $e_{\delta}^i$ and $e_{\theta}^i$ respectively. To further evaluate the distribution of the error bounds, we compute the percentage of the state space $\mathcal{S}$ (measured on the $(\delta, \theta)$ grid) that has error bounds below some threshold value. The percentage varying with threshold value can be viewed as a cumulative distribution. Figures \ref{fig:offset-upper-cumulative} and \ref{fig:angle-upper-cumulative} present the cumulative distributions of the error bounds. It can be seen that most of the state space can be verified with much lower error bounds, with only a small percentage of the regions having larger error bounds. This is especially the case for the offset measurement: 99\% of the state space is guaranteed to have error less than 2.65 (3.3\% of the measurement range), while the global error bound is 12.66 (15.8\%).

\begin{figure*}
  \centering
  \parbox{\figrasterwd}{
    \subcaptionbox{\label{fig:offset-bound}}{\includegraphics[width=.31\hsize]{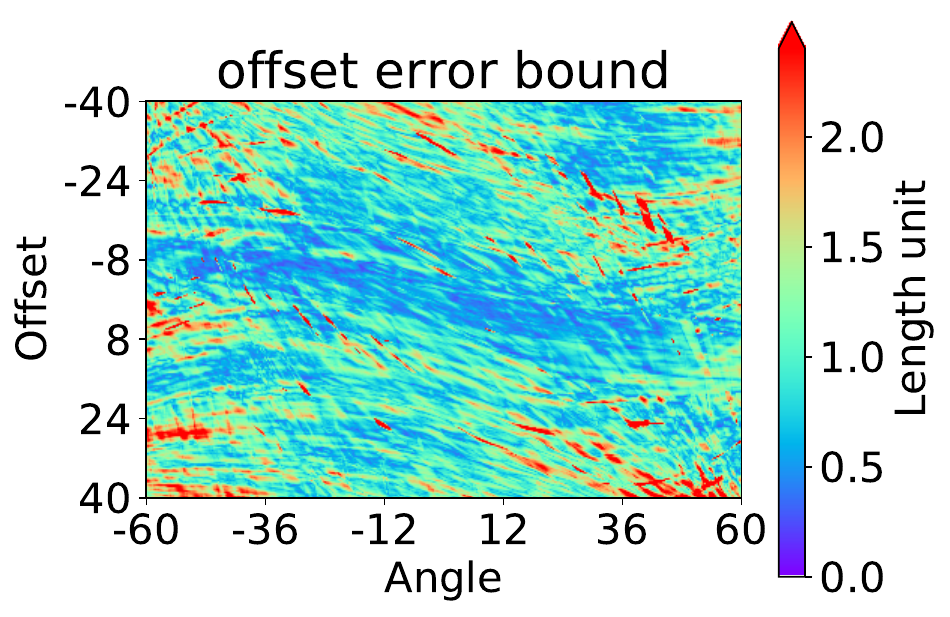}}
    \subcaptionbox{\label{fig:offset-gap}}{\includegraphics[width=.31\hsize]{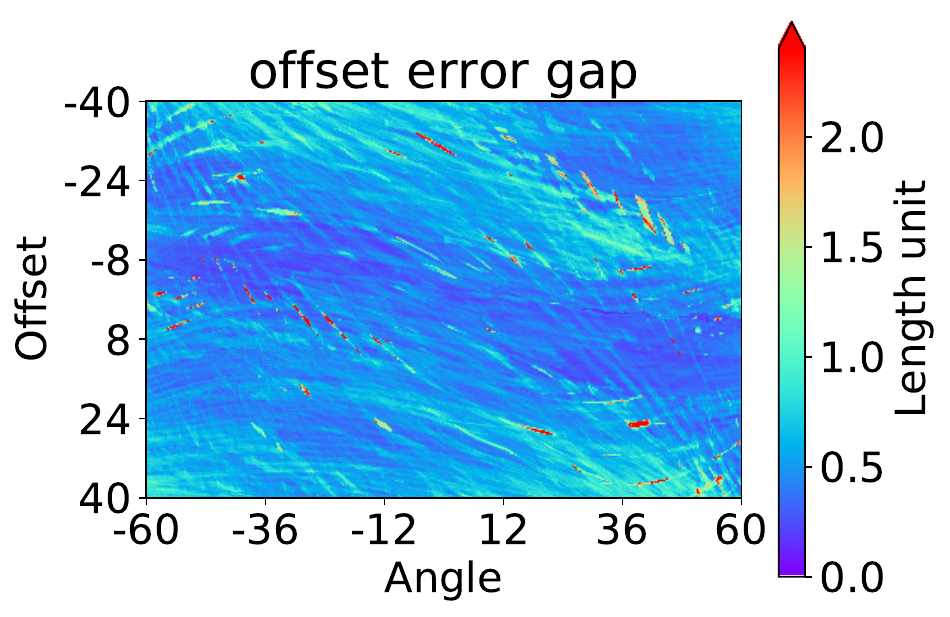}}
    \subcaptionbox{\label{fig:offset-upper-cumulative}}{\includegraphics[width=.29\hsize]{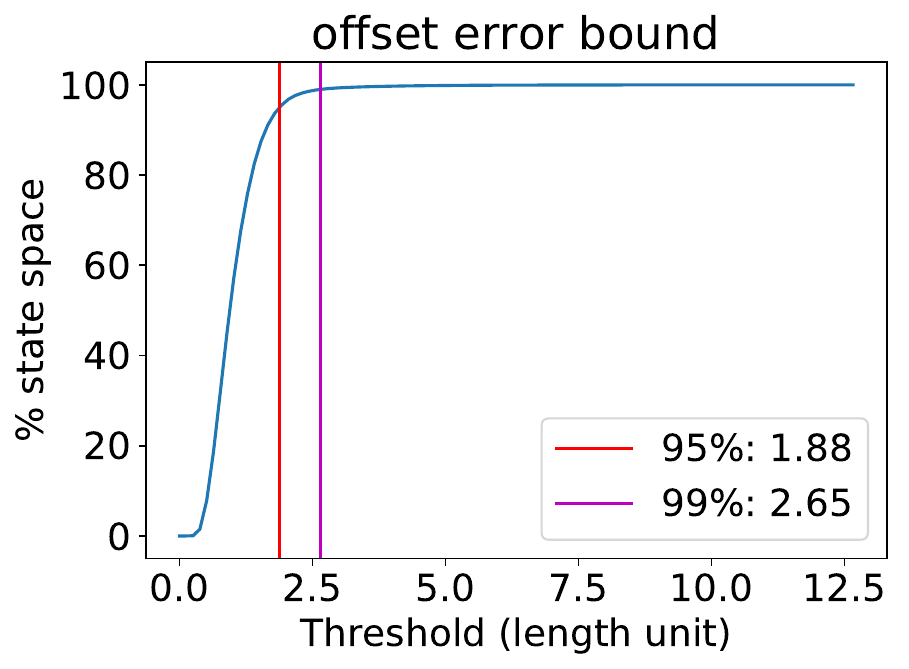}}
  }
  \parbox{\figrasterwd}{
    \subcaptionbox{\label{fig:angle-bound}}{\includegraphics[width=.31\hsize]{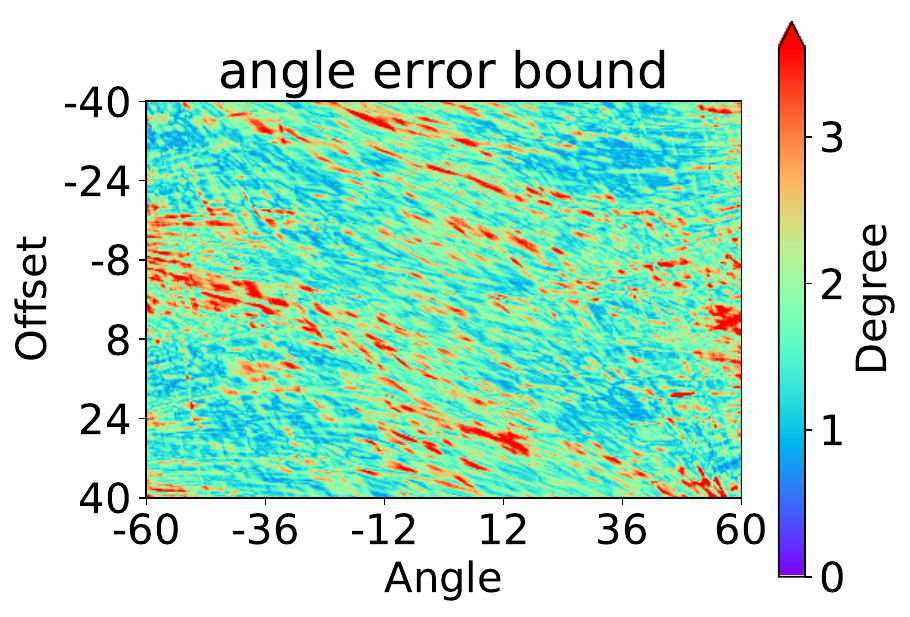}}
    \subcaptionbox{\label{fig:angle-gap}}{\includegraphics[width=.31\hsize]{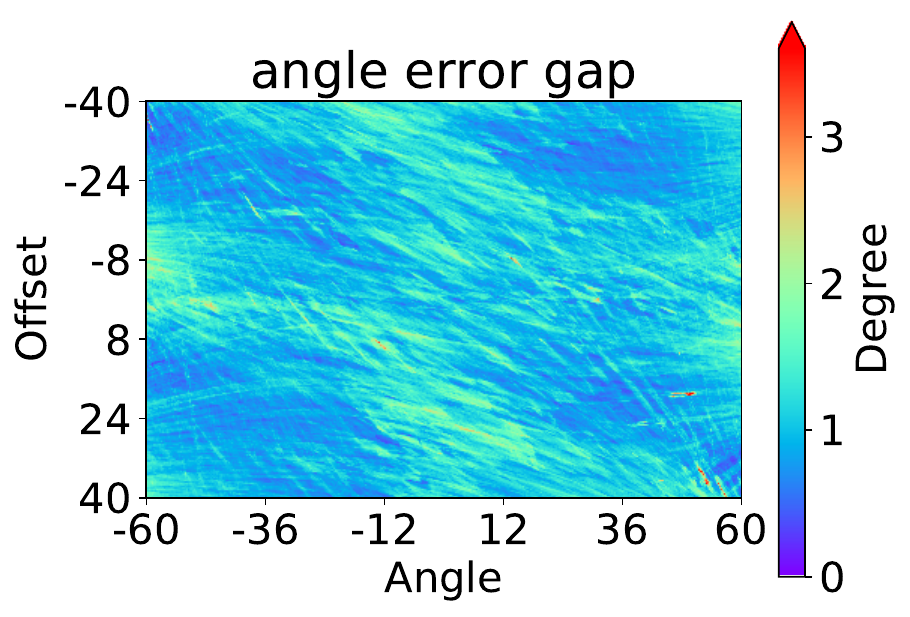}}
    \subcaptionbox{\label{fig:angle-upper-cumulative}}{\includegraphics[width=.29\hsize]{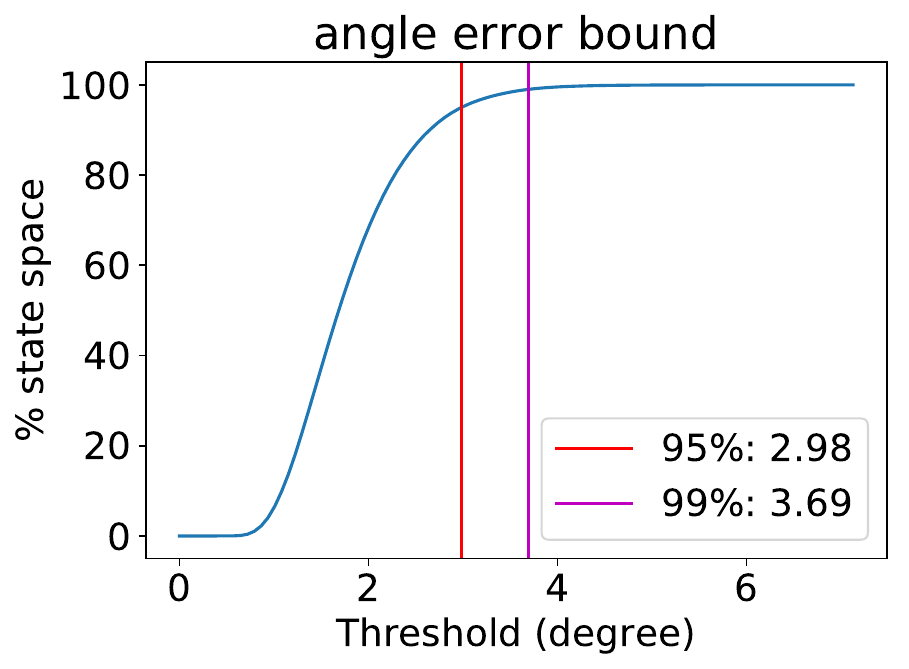}}
  }
  \vspace{-0.5em}
  \caption{(a,d) Heatmaps of the upper bounds on the maximum error of each tile over the offset-angle space. (b,e) Corresponding heatmaps after subtracting empirical estimates of the actual maximum error. (c,f) Percentage of the state space with error upper bounds below some threshold value (cumulative distribution).}
\end{figure*}

\subsubsection*{How tight are the error bounds}
A key question is how well the error bounds reflect the actual maximum error made by the neural network. To study the tightness of the error bounds, we compute empirical estimates of the maximum errors for each $\mathcal{S}_i$, denoted as $\Bar{e}_{\delta}^i$ and $\Bar{e}_{\theta}^i$. We sample multiple $(\delta, \theta)$ within each cell $\mathcal{S}_i$, generate input images for each $(\delta, \theta)$, then take the maximum over the errors of these points as the empirical estimate of the maximum error for $\mathcal{S}_i$. The sample points are drawn on a sub-grid within each cell, with sampling spacing 0.05. This estimate is a lower bound on the maximum error for $\mathcal{S}_i$, providing a reference for evaluating the tightness of the error upper bounds we get from \textit{Tiler}.

We take the maximum of $\Bar{e}_{\delta}^i$'s and $\Bar{e}_{\theta}^i$'s to get a lower bound estimate of the global maximum error. The lower bound estimate of the global maximum error for $\delta$ is 9.12 (11.4\% of the measurement range); for $\theta$ is $4.08\degree$ (3.4\% of the measurement range). We can see that the error bounds from \textit{Tiler} are close to the lower bound estimates derived from the observed errors that the network exhibits for specific inputs.

\hide{Similar to the heatmaps for the bounds $e_{\delta}^i$ and $e_{\theta}^i$, we plot the heatmaps for the error estimates $\Bar{e}_{\delta}^i$ and $\Bar{e}_{\theta}^i$, shown in Figure \ref{fig:offset-estimate} and \ref{fig:angle-estimate}. It can be seen that the landscape of the bounds obtained from \textit{Tiler} is highly similar to the landscape of the error estimates. This suggests that the error bounds from \textit{Tiler} provide a good reflection on the actual maximum error made by the network. }

Having visualized the heatmaps for $e_{\delta}^i$ and $e_{\theta}^i$, we subtract the error estimates $\Bar{e}_{\delta}^i$ and $\Bar{e}_{\theta}^i$ and plot the heatmaps for the resulting gaps in Figures \ref{fig:offset-gap} and \ref{fig:angle-gap}. We can see that most of the regions that have large error bounds are due to the fact that the network itself has large errors there. By computing the cumulative distributions of these gaps between bounds and estimates, we found that for angle measurement, 99\% of the state space has error gap below $1.9\degree$ (1.6\% of measurement range); and for offset measurement, 99\% of the state space has error gap below 1.41 length units (1.8\%). The gaps indicate the maximum possible improvements on the error bounds.

\hide{\begin{figure}
     \centering
     \begin{subfigure}[b]{0.49\textwidth}
         \centering
         \includegraphics[width=\textwidth]{offset_gap_cumulative.png}
         \caption{}
         \label{fig:offset-gap-cumulative}
     \end{subfigure}
     \hfill
     \begin{subfigure}[b]{0.49\textwidth}
         \centering
         \includegraphics[width=\textwidth]{angle_gap_cumulative.png}
         \caption{}
         \label{fig:angle-gap-cumulative}
     \end{subfigure}
    \caption{Cumulative distributions of the gap between bounds and estimates on the maximum error of each tile. (a) is for offset measurement, (b) is for angle measurement.}
    \label{fig:error-gap}
\end{figure}}

\subsubsection*{Contributing factors to the gap}
There are two main contributing factors to the gap between the bound from \textit{Tiler} and the actual maximum error. The first factor is that we use interval arithmetic to compute the error bound in \textit{Tiler}: \textit{Tiler} takes the maximum distance between the range of possible ground truths and the range of possible network outputs as the bound. For a perfect network, this still gives a bound equal to the range of ground truth, instead of zero. The second factor is the extra space included in $\mathcal{B}_i$ that is not on the tile $\mathcal{X}_i$. This results in a larger range on network output being used to calculate error bounds, which makes the error bounds larger.

\subsubsection*{Effect of tile size}
Both of the factors described above are affected by the tile size. We run \textit{Tiler} with a sequence of cell sizes (0.05, 0.1, 0.2, 0.4, 0.8) for the $(\delta, \theta)$ grid. Figure \ref{fig:99-percentiles-grid-size} shows how the 99 percentiles of the error upper bounds and the gap between error bounds and estimates vary with cell size. As tile size gets finer, \textit{Tiler} provides better error bounds, and the tightness of bounds improves.

These results show that \textit{Tiler} gives better error bounds with finer tile sizes. Meanwhile, reducing tile sizes also increases the total number of tiles and the number of optimization problems to solve. Figure \ref{fig:time-grid-size} shows how the total solving time varies with cell size. For cell sizes smaller than 0.2, we indeed see that smaller tile sizes lead to longer total time. For cell sizes larger than 0.2, total solving time increases with cell size instead. The reason is that each optimization problem becomes harder to solve as the tile becomes large. Specifically, the approach we adopt \cite{tjeng2019} relies on presolving ReLU stability to improve speed. The number of unstable ReLUs will increase drastically as the cell size becomes large, which makes the solving slower.

\begin{figure*}
     \centering
      \parbox{\figrasterwd}{
      \parbox{.31\figrasterwd}{%
     \begin{subfigure}[b]{0.33\textwidth}
         \centering
         \includegraphics[width=\textwidth]{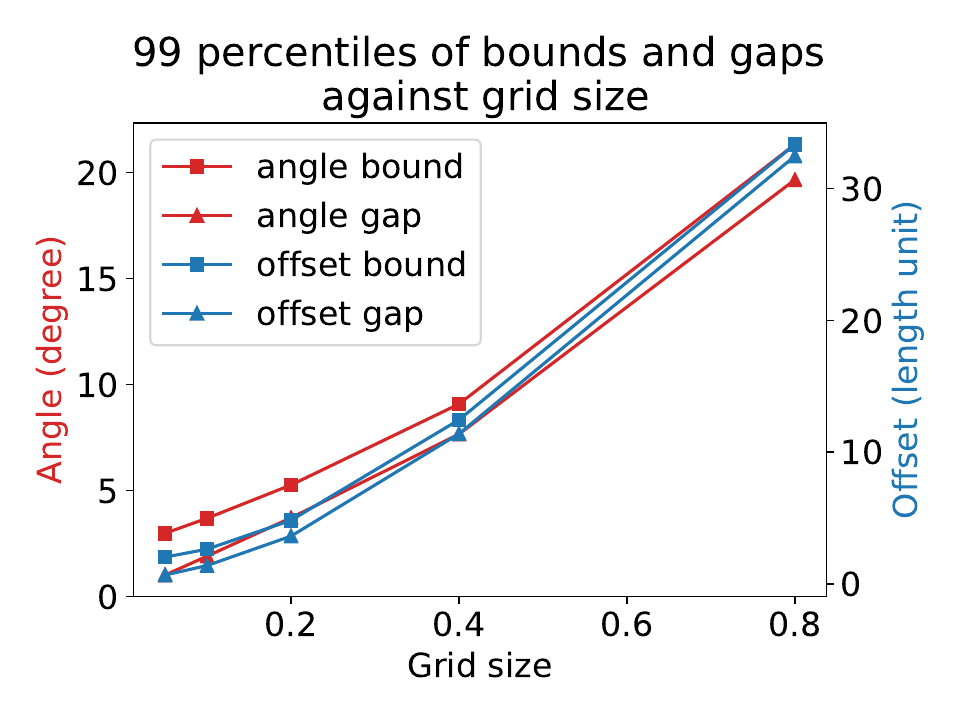}
         \vspace{-1em}
         \caption{}
         \label{fig:99-percentiles-grid-size}
     \end{subfigure}
     }
     \parbox{.30\figrasterwd}{%
     \begin{subfigure}[b]{0.32\textwidth}
         \centering
         \includegraphics[width=\textwidth]{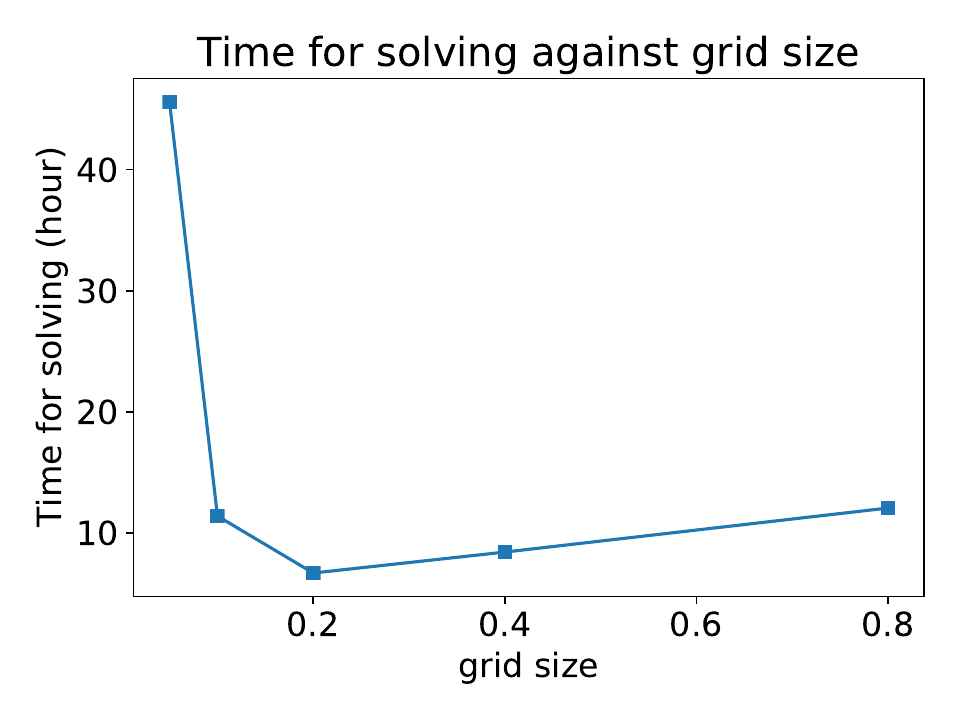}
         \vspace{-1em}
         \caption{}
         \label{fig:time-grid-size}
     \end{subfigure}
    }
    \parbox{.16\figrasterwd}{%
    \begin{center}
        {\includegraphics[width=.5\hsize]{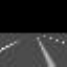}}
        \subcaptionbox{\label{fig:in}}{\includegraphics[width=.5\hsize]{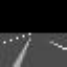}}
    \end{center}
    }\hskip-2em
    \parbox{.16\figrasterwd}{%
    \begin{center}
        {\includegraphics[width=.5\hsize]{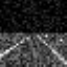}}
        \subcaptionbox{\label{fig:noise}}{\includegraphics[width=.5\hsize]{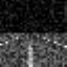}}
    \end{center}
    }\hskip-2em
    \parbox{.16\figrasterwd}{%
    \begin{center}
        {\includegraphics[width=.5\hsize]{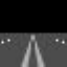}}
        \subcaptionbox{\label{fig:new}}{\includegraphics[width=.5\hsize]{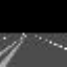}}
    \end{center}
    }}
        \caption{(a) 99 percentiles of the error upper bounds and the gaps between upper and lower bounds against cell size. (b) Total solving time against cell size. (c) Example legal inputs. (d) Example corrupted inputs. (e) Example inputs from new scene. }
\end{figure*}

\subsubsection*{Adaptive tiling}
We next consider an adaptive tiling approach (as described in Section \ref{sec:adaptive}). We consider an error threshold of 2.65 in $\delta$ and $3.69\degree$ in $\theta$ under which the neural network is considered correct. These threshold values only serve as an example and can be chosen differently (these values are taken from the 99 percentiles of error bounds in Figure \ref{fig:offset-upper-cumulative} and \ref{fig:angle-upper-cumulative}). We set the timeout for MILP solvers to be 5 seconds. We start with a cell size of 0.2, and stop further dividing the tiles when the cell sizes reaches a minimum threshold of 0.05.

We compare adaptive tiling with fixed tiling in terms of percentage of state space verified (max error less than the thresholds) and verification time. Figure \ref{fig:tradeoff-perc} shows that adaptive tiling gives the highest percentage of regions verified compared with fixed tiling strategies. Figure \ref{fig:tradeoff-time} shows that adaptive tiling uses less time than all fixed tiling strategies except for the cases for cell sizes 0.2 and 0.4, but fixed tiling with cell size 0.2 only verifies 85.2\% of state space (and 13.5\% for cell size 0.4) compared with 99.7\% for adaptive tiling. To achieve the same percentage verified, fixed tiling (with cell size 0.05) needs 45.5 hours of verification time compared with 9.05 hours for adaptive tiling. We further plot the distribution of final tile sizes in adaptive tiling in Figure \ref{fig:finalsize}. We can see that the majority of the state space (over 80\%) can be verified with the initial large tile size, so \textit{Tiler} only needs to put more work on the remaining regions. This is the reason why adaptive tiling is more efficient.

\begin{figure*}
     \centering
      \parbox{\figrasterwd}{
      \parbox{.32\figrasterwd}{%
     \begin{subfigure}[b]{0.33\textwidth}
         \centering
         \includegraphics[width=\textwidth]{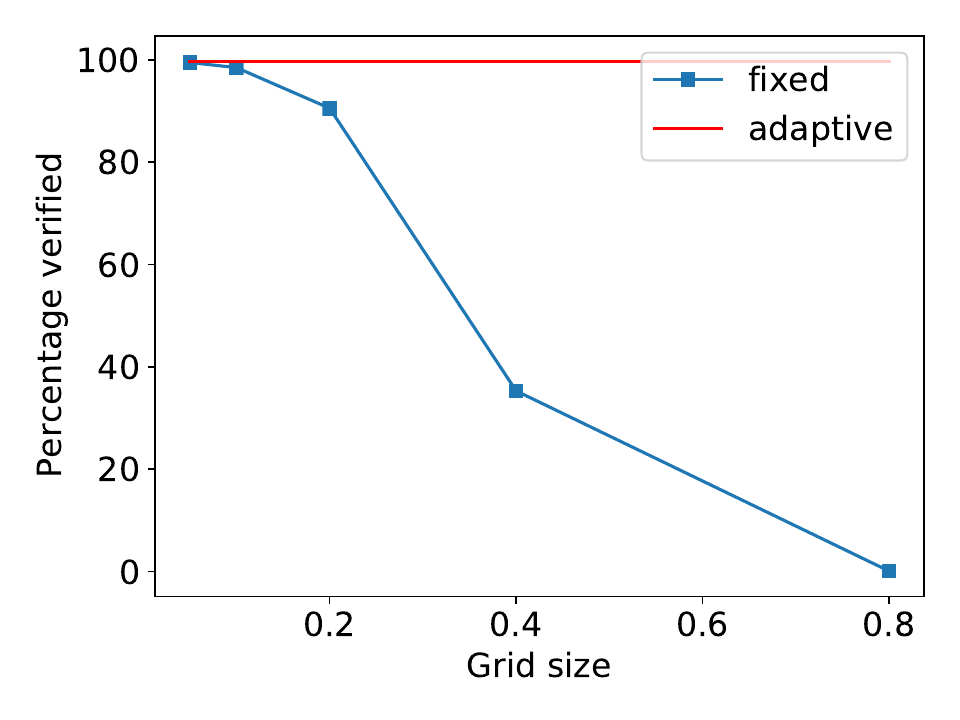}
         \vspace{-1em}
         \caption{}
         \label{fig:tradeoff-perc}
     \end{subfigure}
     }
     \parbox{.32\figrasterwd}{%
     \begin{subfigure}[b]{0.32\textwidth}
         \centering
         \includegraphics[width=\textwidth]{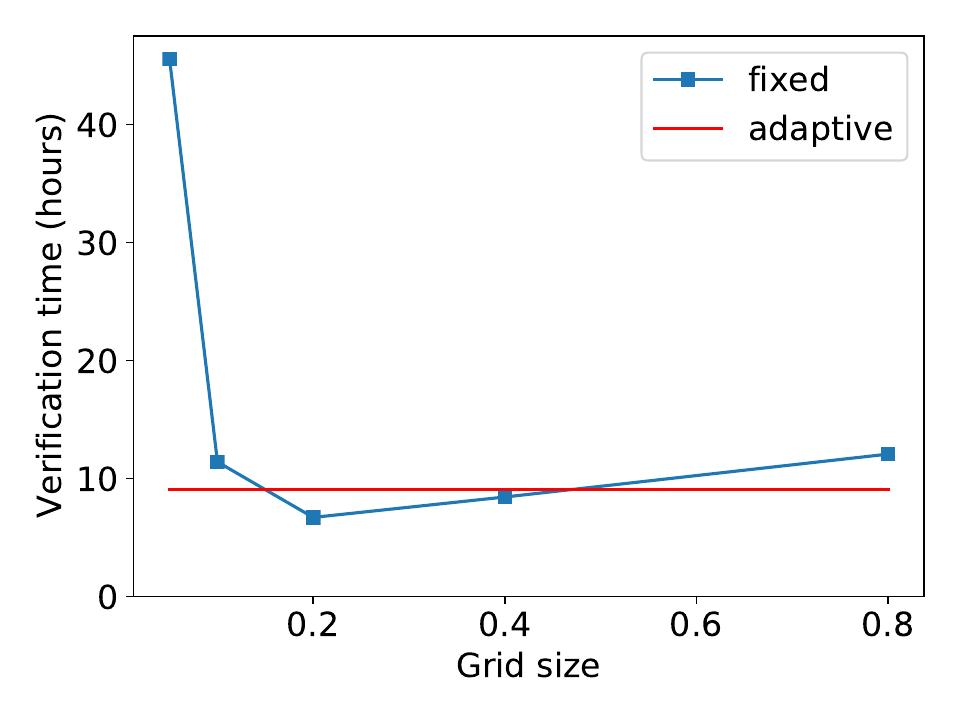}
         \vspace{-1em}
         \caption{}
         \label{fig:tradeoff-time}
     \end{subfigure}
    }
    \parbox{.32\figrasterwd}{%
    \begin{subfigure}[b]{0.32\textwidth}
        \centering
        \includegraphics[width=\textwidth]{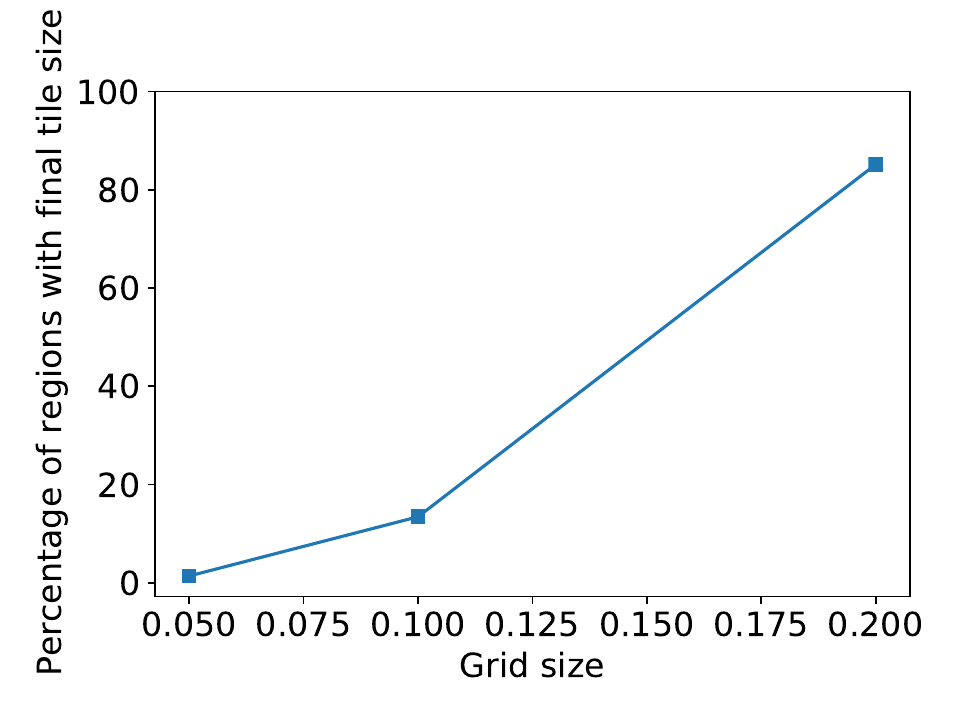}
        \vspace{-1em}
        \caption{}
        \label{fig:finalsize}
    \end{subfigure}
   }
    }
        \caption{(a) Percentage of region verified, fixed v.s. adaptive tiling. (b) Total verification time, fixed v.s. adaptive tiling. (c) Distribution of final tile sizes in adaptive tiling.}
\end{figure*}

\subsubsection*{Detecting illegal inputs}
We implement the input detector by checking if the new input $x^*$ is contained in any of the bounding boxes $\mathcal{B}_i$. We test the detector with 3 types of inputs: 1) legal inputs, generated from the state space through the imaging process; 2) corrupted inputs, obtained by applying i.i.d uniformly distributed per-pixel perturbation to legal inputs; 3) inputs from a new scene, where the road is wider and there is a double centerline. Figure \ref{fig:in} to \ref{fig:new} show some example images for each type. We randomly generated 500 images for each type. Our detector is able to flag all inputs from type 1 as legal, and all inputs from type 2 and 3 as illegal. On average, naive search (over all $\mathcal{B}_i$) takes 1.04s per input, while prediction-guided search takes 0.04s per input. So the prediction-guided search gives a 26$\times$ speedup without any compromise in functionality.

\subsection{Case Study 2: Shape Classification from LiDAR Sensing}

\subsubsection*{Verification results}

We divide the $\theta$ dimension into 90 intervals and the $d$ dimension into 60 intervals to obtain a grid with 5400 cells per shape. We plot the verification results as heatmaps over the state space. Figure \ref{fig:lidar-heatmaps} (top row) shows the results for MILP-based verification of real-valued network. The results for SAT-based verification of binarized network is similar (Figure \ref{fig:lidar-heatmaps-bnn}). We are able to verify the correctness of the network over the majority of the state space. In particular, we verify that the network is always correct when the shape of the sign is triangle.

We also run a finer tiling with half the cell sizes in both $d$ and $\theta$, giving 21600 cells per shape. Figure \ref{fig:lidar-heatmaps} (bottom row) shows the verification results. By reducing the tile sizes, we can verify more regions in the state space.

\begin{figure*}
     \centering
      \parbox{\figrasterwd}{
    \parbox{.3\figrasterwd}{%
    \begin{center}
        {\includegraphics[width=.8\hsize]{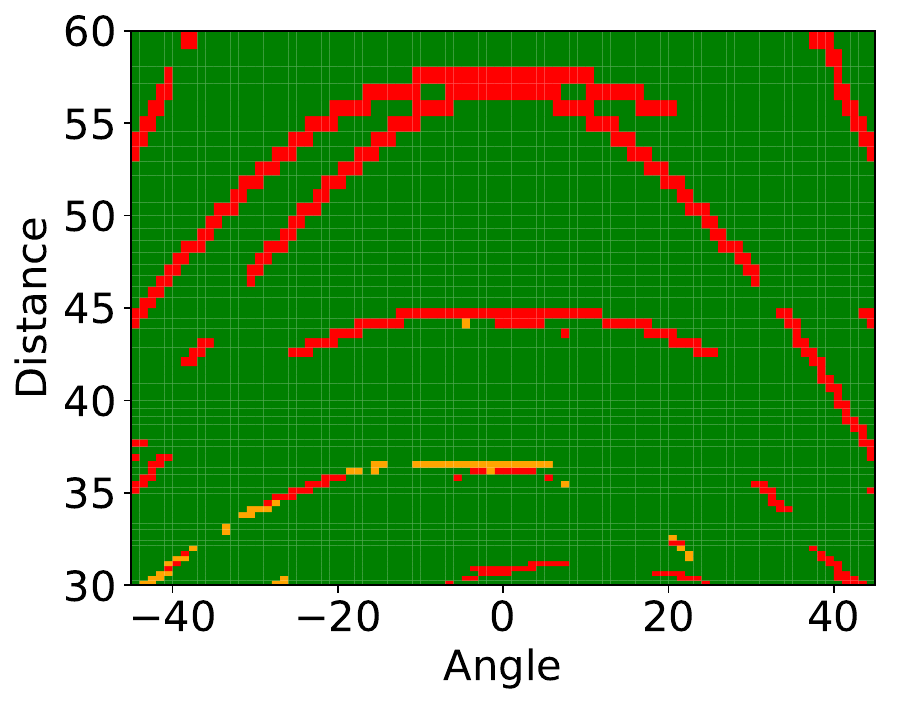}}
        \subcaptionbox*{Square}{\includegraphics[width=.8\hsize]{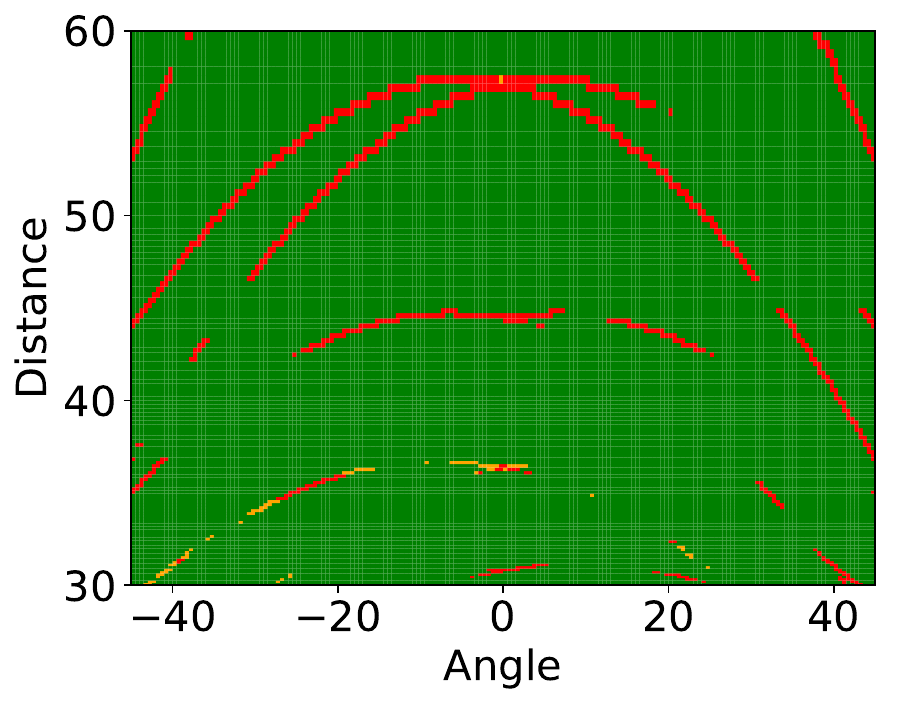}}
    \end{center}
    }
    \parbox{.3\figrasterwd}{%
    \begin{center}
        {\includegraphics[width=.8\hsize]{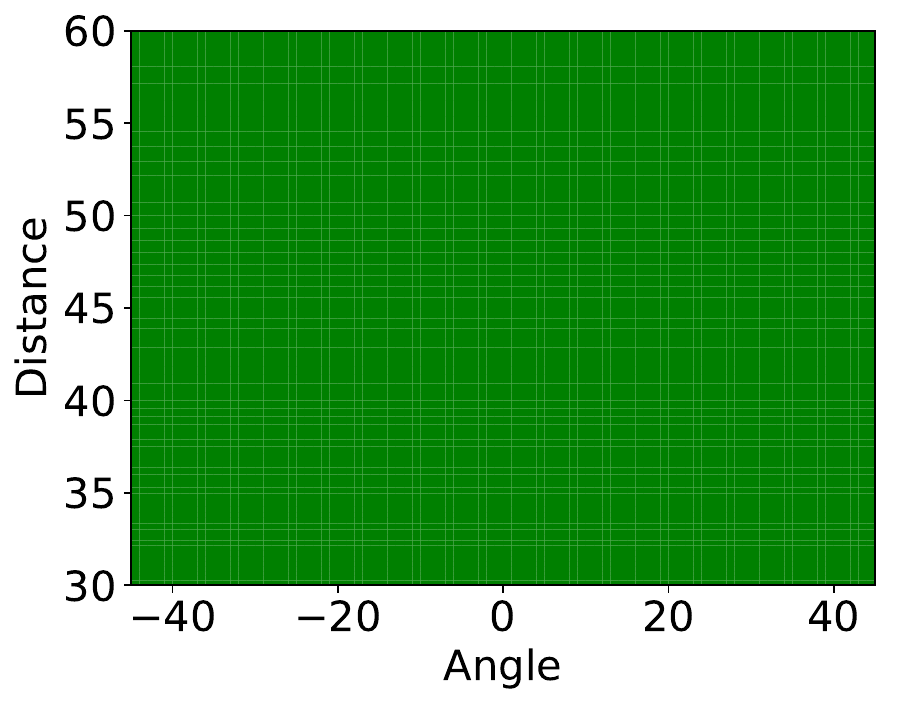}}
        \subcaptionbox*{Triangle}{\includegraphics[width=.8\hsize]{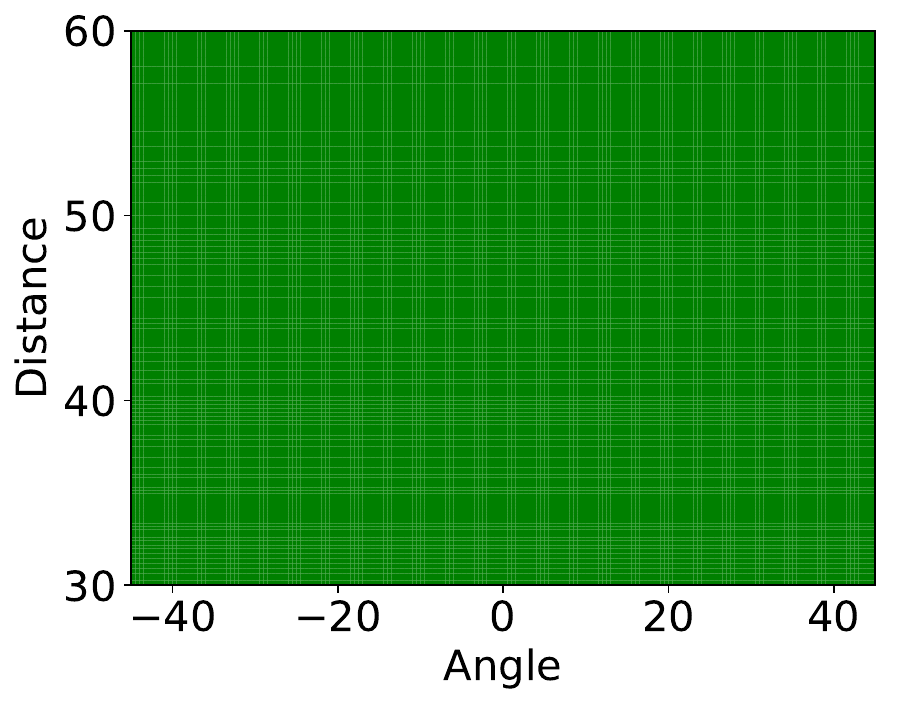}}
    \end{center}
    }
    \parbox{.3\figrasterwd}{%
    \begin{center}
        {\includegraphics[width=.8\hsize]{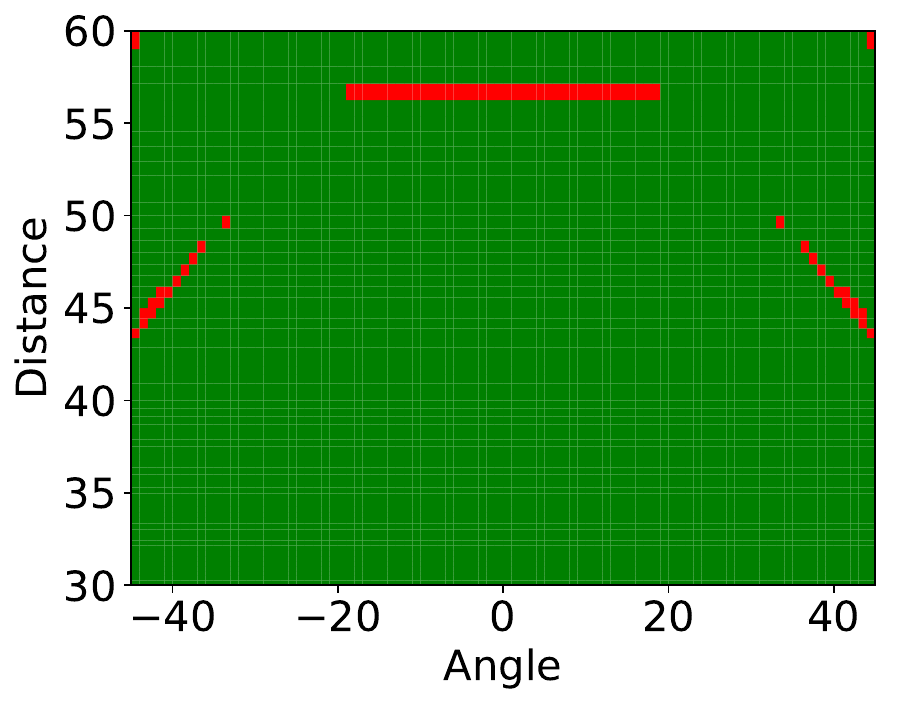}}
        \subcaptionbox*{Circle}{\includegraphics[width=.8\hsize]{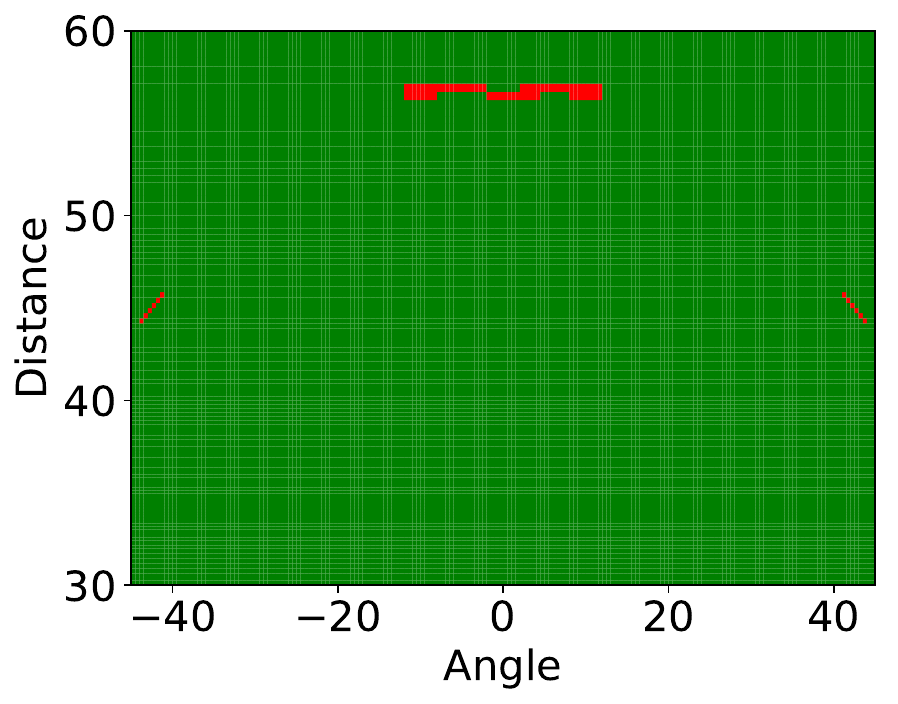}}
    \end{center}
    }}
        \caption{Verification results over the state space for the LiDAR case study, for real-valued network. Top row: tiling with a grid of 90$\times$60 cells; bottom row: a grid of 180$\times$120 cells. Green cells indicate that the tile is verified ($e_i=0$); red cells indicate that we cannot verify that tile ($e_i=1$); orange cells indicate that the solver exceeds time limit while solving for that tile.}
        \label{fig:lidar-heatmaps}
        \vspace{-1em}
\end{figure*}

\begin{figure*}
     \centering
      \parbox{\figrasterwd}{
    \parbox{.3\figrasterwd}{%
    \begin{center}
        {\includegraphics[width=.8\hsize]{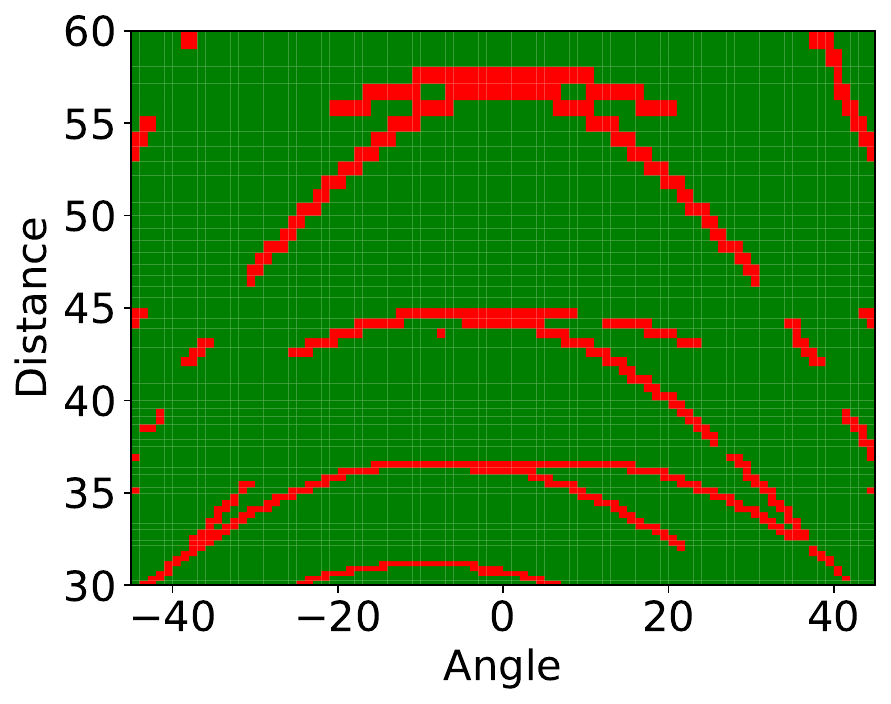}}
        \subcaptionbox*{Square}{\includegraphics[width=.8\hsize]{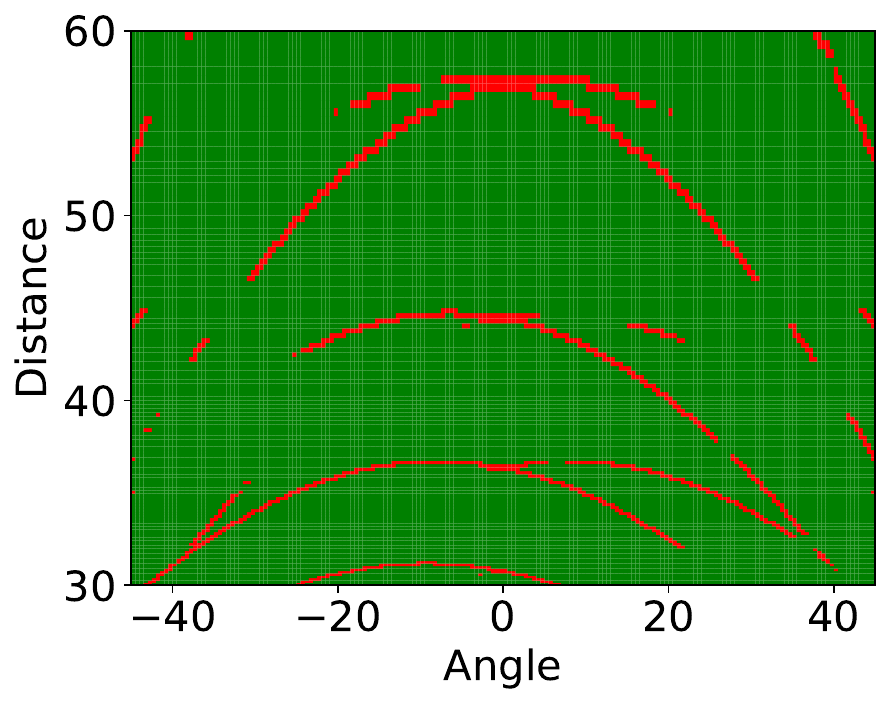}}
    \end{center}
    }
    \parbox{.3\figrasterwd}{%
    \begin{center}
        {\includegraphics[width=.8\hsize]{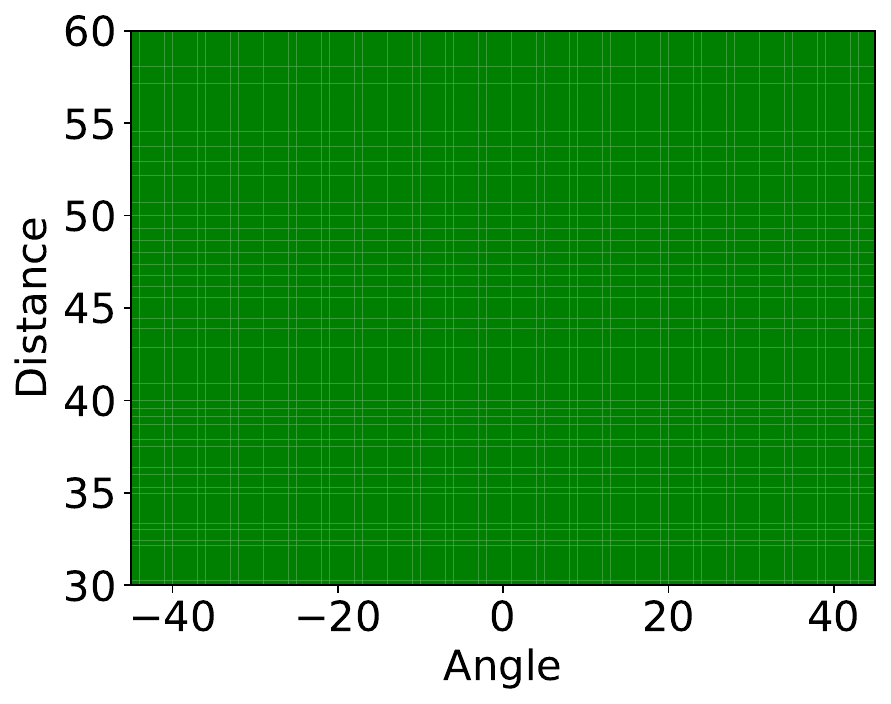}}
        \subcaptionbox*{Triangle}{\includegraphics[width=.8\hsize]{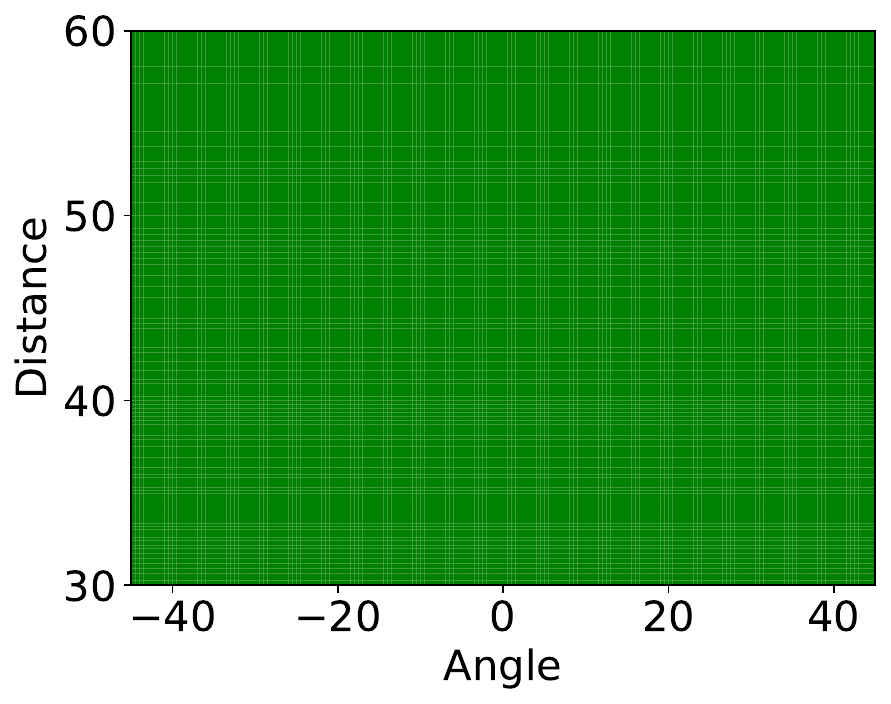}}
    \end{center}
    }
    \parbox{.3\figrasterwd}{%
    \begin{center}
        {\includegraphics[width=.8\hsize]{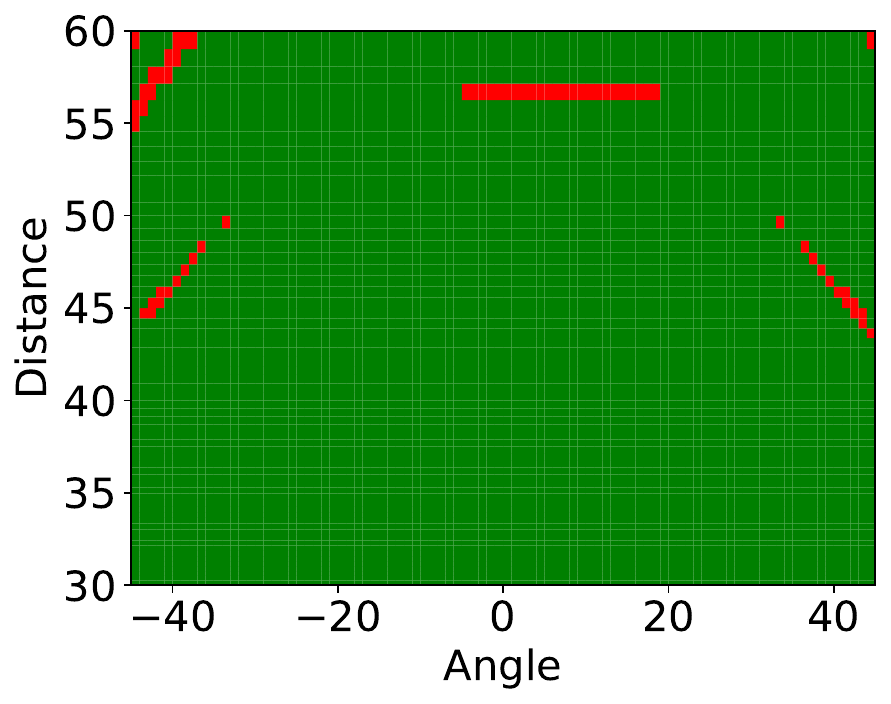}}
        \subcaptionbox*{Circle}{\includegraphics[width=.8\hsize]{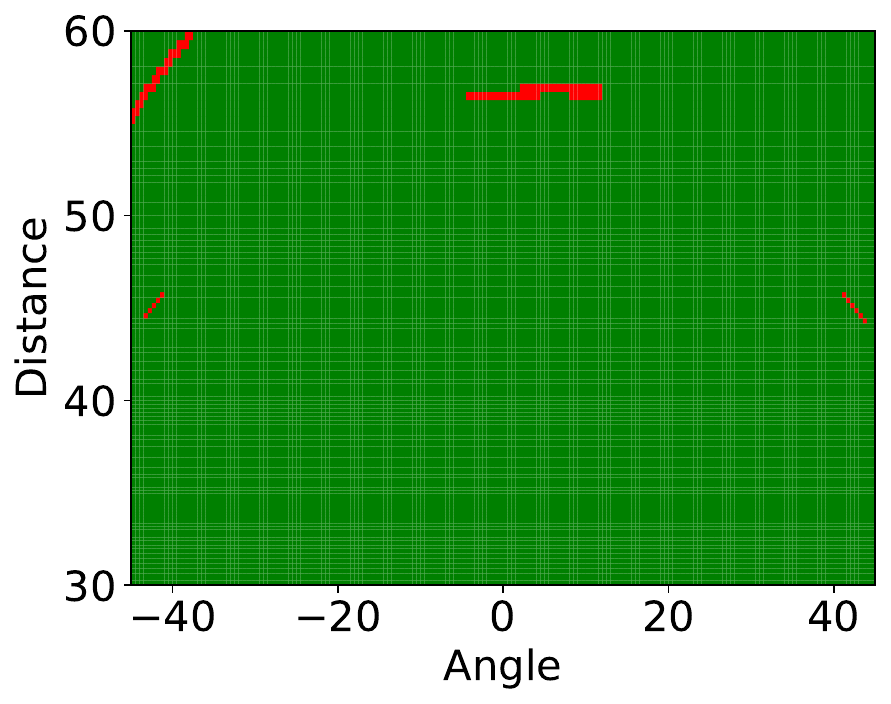}}
    \end{center}
    }}
        \caption{Verification results over the state space for the LiDAR case study, for binarized network. The top row corresponds to tiling with a grid of 90$\times$60 cells, the bottom row corresponds to a grid of 180$\times$120 cells.}
        \label{fig:lidar-heatmaps-bnn}
        \vskip-2em
\end{figure*}

For the tiles that we are unable to verify correctness (red squares in the heatmaps), there are inputs within those bounding boxes on which the network will predict a different class. We inspect several of such tiles to see the inputs that cause problems. Figure \ref{fig:adv-inputs} shows a few examples. In some of the cases (top two in figure), the `misclassified' inputs actually do not look like coming from the ground truth class. This is because of the extra space included in $\mathcal{B}_i$ -- it includes inputs that are reasonably different from target inputs. Such cases will be reduced as the tile size becomes smaller, since the extra spaces in $\mathcal{B}_i$'s will be shrunk. We have indeed observed such phenomenon, as we can verify more regions when the tile size becomes smaller. In some other cases (bottom example), however, the misclassified inputs are perceptually similar to the ground truth class. Yet the network predicts a different class. This reveals that the network is potentially not very robust on inputs around these points. In this sense, our framework provides a way to systematically find regions of the target input space where the network is potentially vulnerable.

\subsubsection*{Adaptive tiling and performance comparisons}
Here we compare the performance between MILP-based verification and SAT-based verification. We compare based on the percentage of region verified and verification time. We also compare adaptive tiling with fixed tiling. For adaptive tiling, we start with a 45$\times$30 grid. We set a timeout of 5 seconds for the solvers and a minimum tile size of 0.5$\degree$ in $\theta$ dimension. We stop further dividing tiles until either the tile is verified or the minimum tile size is reached. Table \ref{tab:lidar-performance} presents the performance comparisons. We can see that SAT-based BNN verification is 200--500x faster than MILP-based real-valued network verification. This speedup is consistent with the results from \cite{eevbnn}, where they test one MLP network and two CNNs on MNIST and CIFAR10. We attribute this speedup to the performance enhancement techniques described in Appendix \ref{sec:eevbnn}. Moreover, adaptive tiling achieves the highest percentage of regions verified with a ~5x speedup compared with fixed tiling that gives the same percentage verified.

\begin{table}[]
    \centering
    
    \begin{tabular}{cccccc}
    \toprule
    Network & \#Tiles & Time (s) & \% verified \\
    \midrule
    {\multirow{3}{*}{Real-valued}} & 5400 & 17124 & 96.9 \\
                                   & 21600 & 35599 & \textbf{98.6} \\
                                   & adaptive & \textbf{7563} & \textbf{98.6} \\ 
    \midrule
    {\multirow{3}{*}{BNN}} & 5400 & 34.9 & 96.3 \\
                           & 21600 & 180.9 & \textbf{98.3} \\
                           & adaptive & \textbf{34.8} & \textbf{98.3} \\ 
    \bottomrule
    \end{tabular}
    \caption{Comparison between real-valued network v.s. BNN verification, adaptive tiling v.s. fixed tiling.}
    \label{tab:lidar-performance}
\end{table}

\section{Conclusion}

In this paper, we present a framework for verifying correctness of neural networks for perception tasks. We introduce the state space of the world and the observation process that maps from states to neural network inputs as specification for the neural network. We present \textit{Tiler}, an algorithm for correctness verification in this framework. This technique allows us to verify that a neural network for perception tasks produces a {\em correct} output within a specified tolerance for {\em every} input in the target input space. This technique also allows us to detect inputs that the neural network is not designed to work on, so that we can flag inputs that have no verified correctness guarantee. Results from the case studies evaluate how well the approach works for a state space characterized by several attributes and a camera imaging or LiDAR measurement observation process. We anticipate that the technique will also work well for other problems that have a low dimensional state space (but potentially a high dimensional input space). For higher dimensional state spaces, the framework makes it possible to systematically target specific regions of the input space to verify. Potential applications include targeted verification, directed testing, and the identification of illegal inputs for which the network is not expected to work on.

%
%
%
\bibliographystyle{splncs04}
\bibliography{references}

\newpage

\appendix
\section{Proofs}
\label{sec:proof-append}
\setcounter{theorem}{0}
\begin{theorem}[Soundness of local error bound for regression]
\label{theorem-regression-local}
Given that Condition 4.1, 4.2(a), 4.3, and 4.4(a) are satisfied, then $\forall x \in \tilde{\mathcal{X}}$, $e(x) \leq e_{\text{local}}(x)$, where $e(x)$ is defined in Equation \ref{eqn:error} and $e_{\text{local}}(x)$ is computed from Equation \ref{eqn:regression-tile-error} and \ref{eqn:regression-local}.
\end{theorem}

\begin{proof}
For any $x \in \tilde{\mathcal{X}}$, we have
\begin{align*}
    e(x) &= \max_{y\in \hat{f}(x)} d(f(x), y) \\
    &= \max_{y\in \hat{f}(x)} |f(x) - y| \\
    &= \max_{y\in \hat{f}(x)} \{ \max (f(x) - y, y - f(x))\}
\end{align*}
Condition 4.1 and 4.2(a) guarantees that for any $x\in \tilde{\mathcal{X}}$ and $y \in \hat{f}(x)$, we can find a tile $\mathcal{X}_i$ such that $ x\in \mathcal{X}_i$ and $l_i \leq y \leq u_i$. Let $t(y, x)$ be a function that gives such a tile $\mathcal{X}_{t(y, x)}$ for a given $x$ and $y \in \hat{f}(x)$. Then
\begin{align*}
    e(x) &\leq \max_{y\in \hat{f}(x)} \{ \max (f(x) - l_{t(y, x)}, u_{t(y, x)} - f(x))\}.
\end{align*}
Since $x \in \mathcal{X}_{t(y, x)}$ and $\mathcal{X}_{t(y, x)} \subseteq \mathcal{B}_{t(y, x)}$ (Condition 4.3), $x \in \mathcal{B}_{t(y, x)}$. By Condition 4.4(a),
\begin{align*}
    l'_{t(y, x)} \leq f(x) \leq u'_{t(y, x)}, \forall y \in \hat{f}(x).
\end{align*}
This gives
\begin{align*}
    e(x) &\leq \max_{y\in \hat{f}(x)} \{ \max (u'_{t(y, x)} - l_{t(y, x)}, u_{t(y, x)} - l'_{t(y, x)})\} \\
    &= \max_{y\in \hat{f}(x)} e_{t(y, x)}
\end{align*}
Since $x \in \mathcal{B}_{t(y, x)}$ for all $y \in \hat{f}(x)$, we have $\{t(y, x) | y \in \hat{f}(x) \} \subseteq \{i | x\in \mathcal{B}_i\}$, which gives
\begin{align*}
    e(x) &\leq \max_{\{t(y, x) | y \in \hat{f}(x) \}} e_{t(y, x)} \\
    &\leq \max_{\{i | x\in \mathcal{B}_i\}} e_i \\
    &= e_{\text{local}}(x)
\end{align*}

\end{proof}

\begin{theorem}[Soundness of global error bound for regression]
\label{theorem-regression-global}
Given that Condition 4.1, 4.2(a), 4.3, and 4.4(a) are satisfied, then $\forall x \in \tilde{\mathcal{X}}$, $e(x) \leq e_{\text{global}}$, where $e(x)$ is defined in Equation \ref{eqn:error} and $e_{\text{global}}$ is computed from Equation \ref{eqn:regression-tile-error} and \ref{eqn:regression-global}.
\end{theorem}

\begin{proof}
By Theorem \ref{theorem-regression-local}, we have $\forall x \in \tilde{\mathcal{X}}$,
\begin{align*}
    e(x) &\leq \max_{\{i | x\in \mathcal{B}_i\}} e_i \\
    &\leq \max_{i} e_i \\
    &= e_{\text{global}}
\end{align*}
\end{proof}

\section{\textit{Tiler} for classification}
\label{sec:classification}

\noindent \textbf{Step 1:} (tiling the space) is the same as regression.
\\

\noindent \textbf{Step 2:} For each $\mathcal{S}_i$, compute the ground truth bound as a set $\mathcal{C}_i \subseteq \mathcal{Y}$, such that $\forall s \in \mathcal{S}_i, \lambda(s) \in \mathcal{C}_i$.

The bounds computed this way satisfy the following condition:

\textit{Condition 4.2(b).} For any $x \in \tilde{\mathcal{X}}$, $\forall y \in \hat{f}(x), \exists \mathcal{X}_i $ such that $ x\in \mathcal{X}_i$ and $y \in \mathcal{C}_i$.

The idea behind \textit{Condition 4.2(b)} is the same as that of \textit{Condition 4.2(a)}, but formulated for discrete $y$. For tiles with $\mathcal{C}_i$ containing more than 1 class, we cannot verify correctness since there is more than 1 possible ground truth class for that tile. Therefore, we should try to make the state tiles containing only 1 ground truth class each when tiling the space in step 1. For tiles with $|\mathcal{C}_i| = 1$, we proceed to the following steps.
\\

\noindent\textbf{Step 3:} (compute bounding box for each input tile) is the same as regression.
\\

\noindent The next step is to solve the network output range. Suppose the quantity of interest has $K$ possible classes. Then the output layer of the neural network is typically a softmax layer with $K$ output nodes. Denote the $k$-th output score before softmax as $o_k(x)$. We use the solver to solve the minimum difference between the output score for the ground truth class and each of the other classes:

\noindent \textbf{Step 4:} Given $f$, $\mathcal{B}_i$, and the ground truth class $c_i$ (the only element in $\mathcal{C}_i$), use appropriate solver to solve lower bounds on the difference between the output score for class $c_i$ and each of the other classes: $l'^{(k)}_i$ for $k \in \{1,\dots , K\}\setminus \{c_i\}$. The bounds need to satisfy:

\textit{Condition 4.4(b)} $\forall x \in \mathcal{B}_i, \forall k \in \{1,\dots , K\}\setminus \{c_i\}, l'^{(k)}_i \leq o_{c_i}(x) - o_k(x)$.
\\

\noindent \textbf{Step 5:} For each tile, compute an error bound $e_i$, with $e_i=0$ meaning the network is guaranteed to be correct for this state tile, and $e_i=1$ meaning no guarantee:
\begin{equation}
\label{eqn:classification-tile-error}
    \text{$e_i = 0$ if and only if } \forall k \in \{1,\dots , K\}\setminus \{c_i\}, l'^{(k)}_i \geq 0
\end{equation}
Otherwise, $e_i = 1$. We can then compute the global and local error bounds using Equation \ref{eqn:regression-global} and \ref{eqn:regression-local}, same as in the regression case.

If we use SAT/SMT based solvers, we can combine step 4 and 5, formulating clauses on the network outputs: $\bigvee\limits_{k \in \{1,\dots , K\} \setminus \{c_i\}} o_{c_i}(x) - o_k(x) < 0$. If the resulting formula is UNSAT, then the correctness for this tile is verified.

\begin{theorem} [Local error bound for classification]
\label{theorem-class-local}
Given that Conditions 4.1, 4.2(b), 4.3, and 4.4(b) are satisfied, then $\forall x \in \tilde{\mathcal{X}}$, $e(x) \leq e_{\text{local}}(x)$, where $e(x)$ is defined in Equation \ref{eqn:error} and $e_{\text{local}}(x)$ is computed from Equation \ref{eqn:classification-tile-error} and \ref{eqn:regression-local}. Equivalently, when $e_{\text{local}}(x)=0$, the network prediction is guaranteed to be correct at $x$.
\end{theorem}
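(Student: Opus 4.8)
The plan is to mirror the structure of the proof of Theorem \ref{theorem-regression-local}, adapting it to the binary error measure $d(y_1,y_2)=\mathbbm{1}_{y_1\neq y_2}$. Since both $e(x)$ and $e_{\text{local}}(x)$ take values in $\{0,1\}$, the inequality $e(x)\le e_{\text{local}}(x)$ is immediate whenever $e_{\text{local}}(x)=1$. So the entire content is the ``equivalently'' clause: assuming $e_{\text{local}}(x)=0$, I must show $e(x)=0$, i.e.\ that $f(x)=y$ for every $y\in\hat f(x)$. Note first that $\hat f(x)$ is nonempty for $x\in\tilde{\mathcal{X}}$: by definition of $\tilde{\mathcal{X}}$ some state $s$ has $x\in g(s)$, so $\lambda(s)\in\hat f(x)$.

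First I would fix $x\in\tilde{\mathcal{X}}$ with $e_{\text{local}}(x)=0$ and an arbitrary $y\in\hat f(x)$. By Condition 4.1 together with Condition 4.2(b), there is a tile $\mathcal{X}_i$ with $x\in\mathcal{X}_i$ and $y\in\mathcal{C}_i$. By Condition 4.3, $x\in\mathcal{B}_i$, so $i$ belongs to $\{j\mid x\in\mathcal{B}_j\}$ and hence $e_i\le e_{\text{local}}(x)=0$, i.e.\ $e_i=0$. By the definition of $e_i$ in the classification setting (Equation \ref{eqn:classification-tile-error}), $e_i=0$ forces two things: (i) $\mathcal{C}_i=\{y_{\mathcal{C}_i}\}$ is a singleton, and since $y\in\mathcal{C}_i$ this gives $y=y_{\mathcal{C}_i}$; and (ii) $l'^{(y)}_i>u'^{(k)}_i$ for every $k\neq y$.

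Next I would invoke Condition 4.4(b): because $x\in\mathcal{B}_i$, we have $o_y(x)\ge l'^{(y)}_i$ and $o_k(x)\le u'^{(k)}_i$ for every $k$. Combining with (ii), $o_y(x)\ge l'^{(y)}_i>u'^{(k)}_i\ge o_k(x)$ for all $k\neq y$, so class $y$ strictly maximizes the output scores at $x$ and therefore $f(x)=y$. Since $y\in\hat f(x)$ was arbitrary, $f(x)=y$ for all $y\in\hat f(x)$; combined with nonemptiness this also yields $\hat f(x)=\{f(x)\}$. Hence $d(f(x),y)=0$ for every $y\in\hat f(x)$, so $e(x)=0\le e_{\text{local}}(x)$, which is exactly the claim.

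I do not expect a serious obstacle; the only point requiring care is the logical bookkeeping. A priori, distinct ground-truth values $y\in\hat f(x)$ could be certified by different tiles, but step (i) forces each such tile's class bound to be exactly $\{y\}$ and step (ii) together with Condition 4.4(b) forces the network to output that same $y$ at $x$; the network prediction being single-valued then retroactively shows $\hat f(x)$ is a singleton. The global error bound for classification would follow from this result by the same monotonicity step used in Theorem \ref{theorem-regression-global}, namely $\max_{\{i\mid x\in\mathcal{B}_i\}}e_i\le\max_i e_i=e_{\text{global}}$.
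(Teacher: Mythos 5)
Your proposal is correct and uses essentially the same argument as the paper: Condition 4.2(b) plus 4.3 places $x$ in a bounding box whose tile certifies $y$, the definition of $e_i=0$ forces $\mathcal{C}_i$ to be the singleton $\{y\}$ with a strict score margin, and Condition 4.4(b) then forces $f(x)=y$. The only (harmless) difference is organizational --- you quantify over $y\in\hat f(x)$ first and let each $y$'s certifying tile pin down $f(x)$, whereas the paper first shows all tiles whose bounding boxes contain $x$ share one class and then applies Condition 4.2(b); your ordering slightly streamlines the paper's ``otherwise there will be a contradiction'' step.
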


\begin{proof}
We aim to prove that $\forall x \in \tilde{\mathcal{X}}$, if $e_{\text{local}}(x)=0$, then $f(x) = \hat{f}(x)$. First, according to the definition of $e_{\text{local}}(x)$, $e_{\text{local}}(x)=0$ implies $e_i=0$ for all $i \in \{i | x\in \mathcal{B}_i\}$. Arbitrarily pick a $p \in \{i | x\in \mathcal{B}_i\}$, we have $e_{p}=0$. Then $\mathcal{C}_p$ only contains one element with class index $c_p$. Condition 4.4(b) gives:
\begin{align*}
\forall k \in \{1,\dots , K\}\setminus \{c_p\}, o_{c_p}(x) - o_k(x) \geq l'^{(k)}_p
\end{align*}
By Equation \ref{eqn:classification-tile-error}, $\forall k \in \{1,\dots , K\}\setminus \{c_p\}, l'^{(k)}_p \geq 0$. Therefore we have:
\begin{align*}
o_{c_p}(x) \geq o_k(x)
\end{align*}
for all $k \neq c_p$. This gives $f(x) = c_p$. For any $q \in \{i | x\in \mathcal{B}_i, i\neq p\}$, we can similarly derive $f(x) = c_q$. Therefore, we must have $c_q = c_p$ for all $q \in \{i | x\in \mathcal{B}_i, i\neq p\}$, otherwise there will be a contradiction. In another word, we have
\begin{align*}
    \bigcup_{\{i | x\in \mathcal{B}_i\}} \mathcal{C}_i &= \{c_p\}
\end{align*}

Now, according to Condition 4.2(b), $\forall y \in \hat{f}(x), \exists \mathcal{X}_i $ such that $ x\in \mathcal{X}_i$ and $y \in \mathcal{C}_i$. Rewriting this condition, we get
\begin{align*}
    \forall y \in \hat{f}(x), y \in \bigcup_{\{i | x\in \mathcal{X}_i\}} \mathcal{C}_i
\end{align*}
which also means $\hat{f}(x) \subseteq \bigcup_{\{i | x\in \mathcal{X}_i\}} \mathcal{C}_i$. But $\{i | x\in \mathcal{X}_i\} \subseteq \{i | x\in \mathcal{B}_i\}$ (Condition 4.3), which gives
\begin{align*}
    \hat{f}(x) \subseteq \bigcup_{\{i | x\in \mathcal{X}_i\}} \mathcal{C}_i \subseteq \bigcup_{\{i | x\in \mathcal{B}_i\}} \mathcal{C}_i &= \{c_p\}.
\end{align*}
Since $\hat{f}(x)$ is not empty, we have $\hat{f}(x) = \{c_p\} = f(x)$.
\end{proof}

\begin{theorem} [Global error bound for classification]
\label{theorem-class-global}
Given that Conditions 4.1, 4.2(b), 4.3, and 4.4(b) are satisfied, then if $e_{\text{global}}=0$, the network prediction is guaranteed to be correct for all $x \in \tilde{\mathcal{X}}$. $e_{\text{global}}$ is computed from Equation \ref{eqn:classification-tile-error} and \ref{eqn:regression-global}.
\end{theorem}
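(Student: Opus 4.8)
The plan is to derive the global statement from the already-established local statement (Theorem \ref{theorem-class-local}), mirroring exactly the way Theorem \ref{theorem-regression-global} is reduced to Theorem \ref{theorem-regression-local} in the regression case. The key observation is that the classification error bounds $e_i$ are binary (each is either $0$ or $1$ by Equation \ref{eqn:classification-tile-error}), so the hypothesis $e_{\text{global}} = \max_i e_i = 0$ forces $e_i = 0$ for every tile index $i$.

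First I would fix an arbitrary $x \in \tilde{\mathcal{X}}$ and consider the index set $\{i \mid x \in \mathcal{B}_i\}$. I would check that this set is nonempty: by Condition 4.1, $x$ lies in some $\mathcal{X}_i$, and by Condition 4.3, $\mathcal{X}_i \subseteq \mathcal{B}_i$, so $x \in \mathcal{B}_i$ for that same $i$. Hence $e_{\text{local}}(x) = \max_{\{i \mid x \in \mathcal{B}_i\}} e_i$ is a maximum over a nonempty set of indices, each of which has $e_i = 0$, so $e_{\text{local}}(x) = 0$.

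Finally I would invoke Theorem \ref{theorem-class-local}: since Conditions 4.1, 4.2(b), 4.3, and 4.4(b) are assumed to hold and we have just shown $e_{\text{local}}(x) = 0$, the network prediction is guaranteed correct at $x$, i.e.\ $f(x) = \hat{f}(x)$. Because $x$ was an arbitrary feasible input, the conclusion holds for all $x \in \tilde{\mathcal{X}}$.

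I do not expect any real obstacle: the argument is a short reduction. The only point worth stating carefully is the nonemptiness of $\{i \mid x \in \mathcal{B}_i\}$, so that no maximum over an empty index set ever arises — this is precisely where Condition 4.1 (the input tiles cover $\tilde{\mathcal{X}}$) together with Condition 4.3 ($\mathcal{X}_i \subseteq \mathcal{B}_i$) is used; the remainder is just the binary-valued bookkeeping of $e_i \in \{0,1\}$.
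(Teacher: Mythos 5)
Your proposal is correct and follows essentially the same route as the paper: deduce $e_i = 0$ for all $i$ from $e_{\text{global}} = 0$, conclude $e_{\text{local}}(x) = 0$ for every feasible $x$, and invoke Theorem \ref{theorem-class-local}. Your explicit check that $\{i \mid x \in \mathcal{B}_i\}$ is nonempty (via Conditions 4.1 and 4.3) is a small point of added care that the paper's proof leaves implicit.
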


\begin{proof}
We aim to prove that if $e_{\text{global}}=0$, then $\forall x \in \tilde{\mathcal{X}}$, $f(x) = \hat{f}(x)$. According to Equation \ref{eqn:regression-global}, $e_{\text{global}}=0$ means $e_i=0$ for all $i$. Then $\forall x \in \tilde{\mathcal{X}}$, $e_{\text{local}}(x)=0$, which by Theorem \ref{theorem-class-local} indicates that $f(x) = \hat{f}(x)$.
\end{proof}


Algorithm \ref{alg:tiler-classification} formally presents the \textit{Tiler} algorithm for classification.

\begin{algorithm}
\small
\caption{Tiler (for classification)}
\label{alg:tiler-classification}
\begin{algorithmic}[1]
\Require $\mathcal{S}, g, \lambda, f$
\Ensure $e_{\text{global}}, \{e_i \}, \{\mathcal{B}_i \}$
     \Procedure{Tiler}{$\mathcal{S}, g, \lambda, f$}
          \State $\{\mathcal{S}_i \} \gets$ \Call{DivideStateSpace}{$\mathcal{S}$} \Comment{Step 1}
          \For{each $\mathcal{S}_i$}
            \State $c_i \gets$ \Call{GetGroundTruthBound}{$\mathcal{S}_i, \lambda$} \Comment{Step 2}
            \State $\mathcal{B}_i \gets$ \Call{GetBoundingBox}{$\mathcal{S}_i, g$} \Comment{Step 3}
            \State $\{l'^{(k)}_i\}_{k\neq c_i} \gets$ \Call{Solver}{$f, \mathcal{B}_i, c_i$} \Comment{Step 4}
            \State $e_i \gets$ \Call{GetErrorBound}{$\{l'^{(k)}_i\}_{k\neq c_i}$} \Comment{Step 5}
          \EndFor
          \State $e_{\text{global}} \gets$ {$\max (\{e_i \})$} \Comment{Step 5}
          \State \textbf{return} $e_{\text{global}}, \{e_i \}, \{\mathcal{B}_i \}$ \Comment{$\{e_i \}, \{\mathcal{B}_i \}$ can be used later to compute $e_{\text{local}}(x)$}
     \EndProcedure
\end{algorithmic}
\end{algorithm}

\section{Verification Subroutine}

After computing the ground truth bounds and bounding boxes, \textit{Tiler} uses some appropriate solver to verify that the maximum error in a tile is less than some threshold. \textit{Tiler} achieves this by either using some constrained optimization based method to solve network output ranges \cite{ILP2016,NSVerify2017,Cheng2017,tjeng2019,singh2018robustness,duality2018,ConvDual2018,certify2018}, or formulating clauses on network outputs and using SAT/SMT based methods to verify \cite{Reluplex2017,Planet2017,Huang2017,eevbnn}. We experiment with two verification subroutines in our case studies. The first is the MILP based method from \cite{tjeng2019}, which solves the network output ranges for real-valued networks exactly. The second is the SAT based
method from \cite{eevbnn}, which performs exact verification for binarized neural networks with a 10x--10000x speedup compared with MILP based exact verification methods on real-valued networks. We briefly describe these two techniques here.

\subsection{MILP-based Method}
\label{sec:milp}

The general principle is to formulate both input constraints and network computations as linear or integer constraints, set network outputs (or functions of network outputs) as the objective, and solve the resulting constrained optimization problem using MILP solver. Representing input ranges as inequality constraints is straightforward. For encoding the network computation, \cite{tjeng2019} introduce the following encoding for ReLU function $y = \max (x, 0), l\leq x \leq u$:
\begin{small}
\begin{equation}
  (y\leq x - l(1-a)) \land (y \geq x) \land (y \leq u\cdot a) \land (y \geq 0) \land (a \in \{0,1\})
\end{equation}
\end{small}
and encoding for the maximum function $y = \max(x_1, x_2, \cdots, x_m), l_i \leq x_i \leq u_i$:

\begin{footnotesize}
\begin{equation}
   \bigwedge\limits_{i=1}^m ((y \leq x_i + (1 - a_i)(u_{max,-i} - l_i)) \land (y \geq x_i)) \land (\sum_{i=1}^m a_i=1) \land (a_i \in \{0,1\})
\end{equation}
\end{footnotesize}
where $u_{max,-i} = \max_{j\neq i}(u_j)$.

Presolving on ReLU stability is used to reduce the number of constraints and thus speed up verification. This requires computing bounds on the inputs to non-linearities. \cite{tjeng2019} propose a progressive bound tightening technique to compute such bounds. It first uses faster methods to compute coarse bounds, and only refines bounds using slower but tighter methods if needed.

\subsection{SAT-based Verification of Binarized Neural Networks}
\label{sec:eevbnn}

\cite{eevbnn} considers binarized networks with ternary weights $\{-1, 0, 1\}$ and binary activations $\{0, 1\}$. The basic building block of a BNN is
\begin{equation}
  y = \textrm{bin}_{act}(\textrm{BatchNorm}(\textrm{linear}(x, \textrm{bin}_{w}(W)) ) )
\end{equation}
where
\begin{align*}
  \textrm{bin}_{act}(x) &= (\textrm{sign(x)} + 1) / 2 \\
  \textrm{bin}_w(x) &= \textrm{sign}(x) \\
  \textrm{linear} &\in \{ \textrm{convolution, matmul} \}
\end{align*}

The input is first quantized and then supplied to the BNN. The output layer does not have $\textrm{bin}_{act}$. It therefore outputs a real valued score for each class.

\paragraph{Encoding BNN verification as SAT formulae}

Each layer of the BNN is encoded as constraint:

\begin{equation}
  y = (k^{\textrm{BN}}\textrm{linear}(x, W^{\textrm{bin}}) + b^{\textrm{BN}} \geq 0 )
\end{equation}
where $W^{\textrm{bin}} = \textrm{bin}_w(W)$. Without loss of generality, consider $\textrm{linear}(x, W^{\textrm{bin}})$ as a dot product $\sum_{i=1}^n x_i W_i^{\textrm{bin}}$, then the constraint can be written as a reified cardinality constraint:

\begin{equation}
  y = \Big( \sum_{i=1}^n l_i \gtreqless b \Big)
\end{equation}
where

\begin{footnotesize}
\begin{align*}
  &b = - \frac{b^{\textrm{BN}}}{k^{\textrm{BN}}} - b^{\textrm{SAT}} \\
  &k^{\textrm{BN}}, b^{\textrm{BN}} \textrm{are BatchNorm parameters at inference time: } x^{\textrm{BN}} = k^{\textrm{BN}}x + b^{\textrm{BN}} \\
  &b^{\textrm{SAT}} = \sum_{i=1}^n \min (W_i^{\textrm{bin}}, 0) \\
  &l_i = \left\{
    \begin{array}{ll}
          x_i & \textrm{if } W_i^{\textrm{bin}} = 1 \\
          \neg x_i & \textrm{if } W_i^{\textrm{bin}} = -1 \\
    \end{array}
    \right. \\
  &\gtreqless \textrm{acts as } \geq \textrm{or } \leq \textrm{according to the sign of } k^{\textrm{BN}} \\
\end{align*}
\end{footnotesize}

For encoding the input constraints, consider a single dimension $x$ of the input that can vary within $[x_{\textrm{min}}, x_{\textrm{max}}]$ (generalizing to multiple dimensions is straightforward). The input after quantization can be encoded as:

\begin{equation}
  \nint{\frac{x}{s}} = L(x_0) + \sum_{i=1}^k t_i
\end{equation}
where
\begin{footnotesize}
\begin{align*}
  &s \textrm{ is the quantization step size} \\
  &k = U(x_0) - L(x_0) \\
  &L(x_0) = \nint{\frac{x_{\textrm{min}}}{s}} \\
  &U(x_0) = \nint{\frac{x_{\textrm{max}}}{s}} \\
  &\{ t_1, \cdots, t_k \} \textrm{ are Boolean variables that encodes the variation in } x \\
\end{align*}
\end{footnotesize}

As we have described in Section \ref{sec:classification}, the output verification condition can be encoded as:
\begin{equation}
\label{eqn:bnn-out}
  \vee_{k \in \{1,\dots , K\}\setminus \{c\}} r_{kc}, \textrm{where } r_{kc} = (o_{c}(x) - o_k(x) < 0)
\end{equation}
$c$ is the ground truth class. Each $r_{kc} = (o_{c}(x) - o_k(x) < 0)$ is also a reified cardinality constraint.

\paragraph{Performance enhancement}

\cite{eevbnn} introduces several techniques to speed up BNN verification. It implements a specialized SAT solver with native support for reified cardinality constraints. It maintains counters for the current number of known true or false literals for each clause, which facilitates the detection of cases that allow propagation for reified cardinality constraints.

The second enhancement is sparsifying the weights in the BNN by training a separate mask $M_W$ for the weights. The weight binarizing function defined earlier becomes:
\begin{equation}
  \textrm{bin}_w(W) = \textrm{sign}(W) \cdot \frac{\textrm{sign}(M_W) + 1}{2}
\end{equation}
This way of sparsifying results in a more balanced sparsity across layers, which is found to significantly improve the verification speed.

The third enhancement is adding a Cardinality Bound Decay (CBD) loss during training to encourage smaller bounds in reified cardinality constraints:
\begin{equation}
  L^{\textrm{CBD}} = \eta \max \Big( - \frac{b^{\textrm{BN}}}{k^{\textrm{BN}}} - b^{\textrm{SAT}} - \tau, 0 \Big)
\end{equation}

The experimental results show that BNN verification is 10-10000x faster than the verification of real-valued networks using MILP on MNIST and CIFAR10 \cite{eevbnn}.

\section{Position Measurement from Road Scene}

\subsection{Scene}
\label{sec:scene-append}

For clarity, we describe the scene in a Cartesian coordinate system. Treating 1 length unit as roughly 5 centimeters gives a realistic scale. The scene contains a road in the $\mathrm{xy}$-plane, extending along the $\mathrm{y}$-axis. A schematic view of the road down along the $\mathrm{z}$-axis is shown in Figure \ref{fig:road-scene}. The road contains a centerline and two side lines, each with width $\mathrm{x}_2=4.0$. The width of the road (per lane) is $\mathrm{x}_1=50.0$, measuring from the center of the centerline to the center of each side line. Each point in the scene is associated with an intensity value in $[0.0,1.0]$ grayscale. The intensity of the side lines is $i_1=1.0$, centerline $i_2=0.7$, road $i_3=0.3$, and sky $i_4=0.0$. The intensity adopts a ramp change at each boundary between the lines and the road. The half width of the ramp is $\mathrm{x}_3=1.0$.

The schematic of the camera is shown in Figure \ref{fig:camera}. The camera's height above the road is fixed at $\mathrm{z}_c=20.0$. The focal length $f=1.0$. The image plane is divided into $32\times32$ pixels, with pixel side length $d=0.16$.

\subsection{Camera Imaging Process}
\label{sec:camera-imaging-append}

The camera imaging process we use can be viewed as an one-step ray tracing: the intensity value of each pixel is determined by shooting a ray from the center of that pixel through the focal point, then taking the intensity of the intersection point between the ray and the scene. In the example scene, the intersection points for the top half of the pixels are in the sky (intensity 0.0). The intersection points for the lower half of the pixels are on the $\mathrm{xy}$-plane. The position of the intersection point (in world coordinates) can be computed using a transformation from the pixel coordinate in homogeneous coordinate systems: \footnote{Note that for this imaging process, flipping the image on the image plane to the correct orientation (originally upside-down) is equivalent to taking the image on a virtual image plane that is in front of the focal point by the focal length, by shooting rays from the focal point through the (virtual) pixel centers. We compute the intersection points from the pixel coordinates on the virtual image plane.}

\begin{equation}
    \Tilde{X}_W^{(i,j)} = T_{FW} \cdot
    P_p \cdot
    R_{CF} \cdot
    P_{PC} \cdot
    \begin{bmatrix}
    i \\
    j \\
    1
    \end{bmatrix}
\end{equation}

$P_{PC}$ is the transformation from pixel coordinates to camera coordinates. Camera coordinates have the origin at the focal point and axes aligned with the orientation of the camera. We define focal coordinates to have the origin also at the focal point, but with axes aligned with world coordinates (the coordinate system used in Section \ref{sec:scene-1}). $R_{CF}$ is the rotation matrix that transforms from camera coordinates to focal coordinates. $P_p$ represents the projection to the road plane through the focal point, in focal coordinates. Finally, $T_{FW}$ is the translation matrix that transforms from focal coordinates to world coordinates. The transformation matrices are given below:

\begin{small}
\begin{equation*}
    T_{FW} = \begin{bmatrix} 1&0&0&\delta \\
    0&1&0&0 \\
    0&0&1&\mathrm{z}_c\\
    0&0&0&1
    \end{bmatrix},
    P_p = \begin{bmatrix}
    -\mathrm{z}_c&0&0&0 \\
    0&-\mathrm{z}_c&0&0 \\
    0&0&-\mathrm{z}_c&0 \\
    0&0&1&0
    \end{bmatrix}
\end{equation*}

\begin{equation*}
    R_{CF} = \begin{bmatrix}
    \cos{\theta}&-\sin{\theta}&0&0 \\
    \sin{\theta}&\cos{\theta}&0&0 \\
    0&0&1&0 \\
    0&0&0&1
    \end{bmatrix},
    P_{PC} = \begin{bmatrix}
    0&d&\frac{d}{2} - \frac{nd}{2} \\
    0&0&f \\
    -d&0&-(\frac{d}{2} - \frac{nd}{2}) \\
    0&0&1
    \end{bmatrix}
\end{equation*}
\end{small}

The variables are defined as in Table \ref{tab:scene-params}, with $\delta$ and $\theta$ being the offset and angle of the camera. After the intensity values of the pixels are determined, they are scaled and quantized to the range $[0,255]$, which are used as the final image taken.

\begin{figure*}
    \centering
    \begin{subfigure}[b]{0.6\textwidth}
         \centering
         \includegraphics[width=0.8\textwidth]{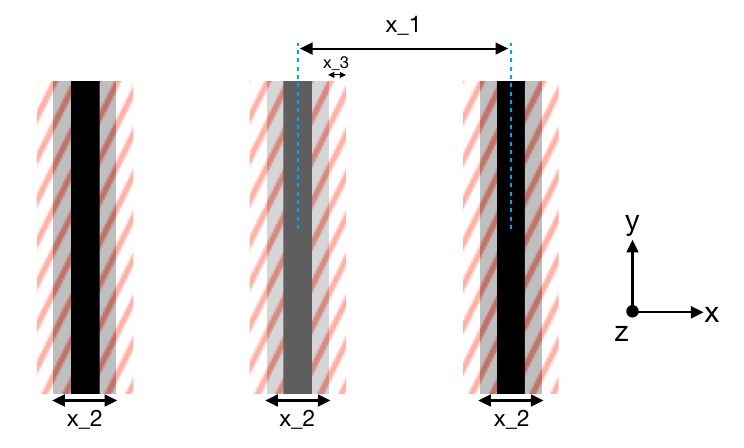}
         \caption{}
         \label{fig:road-scene}
     \end{subfigure}
     \begin{subfigure}[b]{0.35\textwidth}
         \centering
         \includegraphics[width=\textwidth]{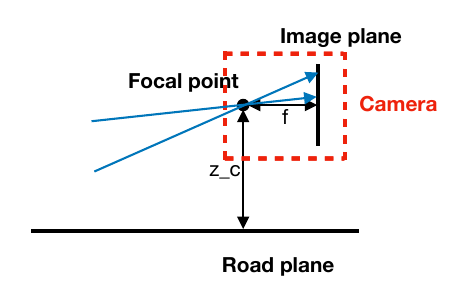}
         \caption{}
         \label{fig:camera}
     \end{subfigure}
    \caption{(a) Schematic of the top view of the road. (b) Schematic of the camera.}
    \label{fig:scene}
\end{figure*}

\begin{table*}
    \centering
    \caption{Scene parameters}
    \begin{tabular}{cccccc}
    \toprule
      Parameter & Description & Value & Parameter & Description & Value\\
    \midrule
    $\mathrm{x}_1$ & Road width & 50.0 & $i_1$ & Intensity (side line) & 1.0 \\
    $\mathrm{x}_2$ & Line width & 4.0 & $i_2$ & Intensity (centerline) & 0.7 \\
    $\mathrm{x}_3$ & Ramp half width & 1.0 & $i_3$ & Intensity (road) & 0.3\\
    $\mathrm{z}_c$ & Camera height & 20.0 & $i_4$ & Intensity (sky) & 0.0 \\
    $f$ & Focal length & 1.0 & $n$ & Pixel number & 32 \\
    $d$ & Pixel side length & 0.16 & & & \\
    \bottomrule
    \end{tabular}

    \label{tab:scene-params}
\end{table*}

\subsection{Method to Compute Pixel-wise Range for Each Tile}
\label{sec:compute-range}

In the road scene example, we need to encapsulate each tile in the input space with a $l_\infty$-norm ball for the MILP solver to solve. This requires a method to compute a range for each pixel that covers all values this pixel can take for images in this tile. This section presents the method used in this paper. 

A tile in this example corresponds to images taken with camera position in a local range $\delta \in [\delta_1, \delta_2]$, $\theta \in [\theta_1,\theta_2]$. For pixels in the upper half of the image, their values will always be the intensity of the sky. For each pixel in the lower half of the image, if we trace the intersection point between the projection ray from this pixel and the road plane, it will sweep over a closed region as the camera position varies in the $\delta$-$\theta$ cell. The range of possible values for that pixel is then determined by the range of intensities in that region. In this example, there is an efficient way of computing the range of intensities in the region of sweep. Since the intensities on the road plane only varies with $\mathrm{x}$, it suffices to find the span on $\mathrm{x}$ for the region. The extrema on $\mathrm{x}$ can only be achieved at: 1) the four corners of the $\delta$-$\theta$ cell; 2) the points on the two edges $\delta=\delta_1$ and $\delta=\delta_2$ where $\theta$ gives the ray of that pixel perpendicular to the $y$ axis (if that $\theta$ is contained in $[\theta_1,\theta_2]$). Therefore, by computing the location of these critical points, we can obtain the range of $\mathrm{x}$. We can then obtain the range of intensities covered in the region of sweep, which will give the range of pixel values.

\section{Sign Classification using LiDAR Sensor}

\subsection{Scene}
\label{sec:lidar-scene}

The sign is a planer object standing on the ground. It consists of a holding stick and a sign head on top of the stick. The stick is of height 40.0 and width 2.0 (all in terms of the length unit of the scene). The sign head has three possible shapes: square (with side length 10.0), equilateral triangle (with side length 10.0), and circle (with diameter 10.0). The center of the sign head coincides with the middle point of the top of the stick. 

The LiDAR sensor can vary its position within the working zone (see Figure \ref{fig:lidar-schematic}). Its height is fixed at 40.0 above the ground. The center direction of the LiDAR is always pointing parallel to the ground plane towards the centerline of the sign. 

\subsection{LiDAR Measurement Model}
\label{sec:lidar-model}

The LiDAR sensor emits an array of 32$\times$32 laser beams and measures the distance to the first object along each beam. The directions of the beams are arranged as follows. At distance $f=4.0$ away from the center of the sensor, there is a (imaginary) 32$\times$32 grid with cell size 0.1. There is a beam shooting from the center of the sensor through the center of each cell in the grid. 

The LiDAR model we consider has a maximum measurement range of \texttt{MAX\_RANGE}=300.0. If the distance of the object is larger than \texttt{MAX\_RANGE}, the reflected signal is too weak to be detected. Therefore, all the distances larger than \texttt{MAX\_RANGE} will be measured as \texttt{MAX\_RANGE}. The distance measurement contains a Guassian noise $n\sim \mathcal{N}(0, \sigma^2)$, where $\sigma = 0.001\times \texttt{MAX\_RANGE}$. So the measured distance for a beam is given by
\begin{equation*}
    d = d_0 + n, n\sim \mathcal{N}(0, \sigma^2)
\end{equation*}
\noindent where $d_0$ is the actual distance and $d$ is the measured distance. 

\subsection{Method to compute distance bounds for each beam}
\label{sec:lidar-box}

To compute the bounding boxes in \textit{Tiler}, we need to compute a lower bound and an upper bound on the measured distance for each beam as the sensor position varies within a tile. We first determine whether the intersection point $p$ between the beam and the scene is 1) always on the sign, 2) always on the background (ground/sky), or 3) covers both cases, when the sensor position varies within the tile. Denote the $d$ range of the tile as $[d_1, d_2]$ and the $\theta$ range as $[\theta_1, \theta_2]$. We find the intersection point $p$ at a list of critical positions: $(d_1, \theta_1)$, $(d_1, \theta_2)$, $(d_2, \theta_1)$, $(d_2, \theta_2)$. If at some angle $\theta^*\in [\theta_1, \theta_2]$, the beam direction has no component along the \texttt{y}-axis, then we add $(d_1, \theta^*)$ to the list of critical positions. The cases for $p$ (on sign/background) covered in the critical points determine the cases covered in the whole tile. 

In most situations, the distances of $p$ at the list of critical positions (we refer it as the list of critical distances) contain the maximum and minimum distances of $p$ in the tile. There is one exception: at $d=d_1$, as $\theta$ varies in $[\theta_1, \theta_2]$, if the intersection point shifts from sign to background (or vice versa), then the minimum distance of $p$ can occur at $(d_1, \theta')$ where $\theta'$ is not equal to $\theta_1$ or $\theta_2$. To handle this case, if at the previous step we find that $p$ do occur on the sign plane in the tile, then we add the distance of the intersection point between the beam and the \textit{sign plane} at position $(d_1, \theta_1)$ (or $(d_1, \theta_2)$, whichever gives a smaller distance) to the list of critical distances. Notice that this intersection point is not necessarily on the \textit{sign}. In this way, the min and max among the critical distances are guaranteed to bound the distance range for the beam as the sensor position varies within the tile. After this, the bounds can be extended according to the noise scale to get the bounding boxes. 

\section{Additional Training Details}
\label{sec:training-detail}

This section presents the additional details of the training of the neural network in the case study. We use Adam optimizer with learning rate $0.01$. We use early stopping based on the loss on the validation set: we terminate training if the validation performance does not improve in 5 consecutive epochs. We take the model from the epoch that has the lowest loss on the validation set.

\end{document}